\documentclass[]{article}


\usepackage{proceed2e}
\usepackage[disable]{todonotes}

\newcommand{\todok}[2][]{\todo[inline,color=orange!20!white,#1]{#2}}
\newcommand{\todot}[2][]{\todo[inline,color=blue!20!white,#1]{#2}}

\newif\ifsup
\suptrue

\usepackage{latexsym}
\usepackage{amsmath}
\usepackage{amssymb}
\usepackage{mathtools}
\usepackage{enumerate}
\usepackage{accents}
\usepackage{tikz}
\usepackage{pgfplots}
\usepackage[ruled]{algorithm}
\usepackage{algpseudocode}
\usepackage{dsfont}
\usepackage{caption}
\usepackage{hyperref}
\hypersetup{
    bookmarks=true,         
    unicode=false,          
    pdftoolbar=true,        
    pdfmenubar=true,        
    pdffitwindow=false,     
    pdfstartview={FitH},    
    pdftitle={My title},    
    pdfauthor={Author},     
    pdfsubject={Subject},   
    pdfcreator={Creator},   
    pdfproducer={Producer}, 
    pdfkeywords={keyword1} {key2} {key3}, 
    pdfnewwindow=true,      
    colorlinks=true,       
    linkcolor=red,          
    citecolor=blue,        
    filecolor=magenta,      
    urlcolor=cyan           
}
\usepackage{amsthm}
\usepackage{times}
\usepackage{natbib}
\usepackage{nicefrac}
\usepackage{wrapfig}
\usepackage{enumitem}
\usepackage[capitalize]{cleveref}

\newcommand{\defined}{\vcentcolon =}
\newcommand{\rdefined}{=\vcentcolon}
\newcommand{\E}{\mathbb E}
\newcommand{\Var}{\operatorname{Var}}
\newcommand{\calF}{\mathcal F}
\newcommand{\sr}[1]{\stackrel{#1}}
\newcommand{\set}[1]{\left\{#1\right\}}
\newcommand{\ind}[1]{\mathds{1}\!\!\set{#1}}

\newcommand{\argmin}{\operatornamewithlimits{arg\,min}}

\newcommand{\ceil}[1]{\left \lceil {#1} \right\rceil}
\newcommand{\eqn}[1]{\begin{align}#1\end{align}}
\newcommand{\eq}[1]{\begin{align*}#1\end{align*}}

\def\subsubsect#1{\vspace{1ex plus 0.5ex minus 0.5ex}\noindent{\bf\boldmath{#1.}}}

\renewcommand{\P}[1]{\mathbb{P}\left\{#1\right\}}

\newcommand{\consttdelta}{{\delta \over 48 \eta^4 n^{6}}}
\newcommand{\consttdeltak}{{\delta \over 48 \eta^4_k n^{6}}}
\newcommand{\constcone}{27 \log {2 \over \tilde\delta}}
\newcommand{\constconek}{27 \log {2 \over \tilde\delta_k}}
\newcommand{\constctwo}{6 \log {2 \over \tilde\delta}}
\newcommand{\constctwok}{6 \log {2 \over \tilde\delta_k}}
\newcommand{\KL}{\operatorname{KL}}

\newcommand{\ubar}[1]{\underline{#1\mkern-3mu}\mkern3mu }
\newcommand{\ber}[1]{\operatorname{Bernoulli}\left(#1\right)}

\let\epsilon\varepsilon

\theoremstyle{plain}
\newtheorem{theorem}{Theorem}
\newtheorem{proposition}[theorem]{Proposition}
\newtheorem{lemma}[theorem]{Lemma}

\theoremstyle{definition}

\newtheorem{remark}[theorem]{Remark}

\theoremstyle{remark}

\ifsup
\else
\allowdisplaybreaks 
\fi

\title{Optimal Resource Allocation with Semi-Bandit Feedback}

\author{ {\bf Tor Lattimore
} \\
Dept.\ of Computing Science \\
University of Alberta, Canada
\And
{\bf Koby Crammer}  \\
Dept.\ of Electrical Engineering \\
The Technion, Israel
\And
{\bf Csaba Szepesv\'ari\thanks{\hspace{0.1cm} On sabbatical leave from the Department of Computing Science, University of Alberta, Canada}}
\\
Microsoft Research\\
Redmond, USA
%
}

\begin{document}
\maketitle

\begin{abstract}
We study a sequential resource allocation problem involving a fixed number of recurring jobs.
At each time-step the manager should distribute available resources among the jobs in order to maximise the expected number of completed jobs. 
Allocating more resources to a given job increases the probability
that it completes, but with a cut-off. 
Specifically, we assume a linear
model where the probability increases linearly until it equals one, after which
allocating additional resources is wasteful.
We assume the difficulty of each job is unknown
and present the first algorithm for this problem and prove upper and lower bounds on its regret.
Despite its apparent simplicity, the problem has a rich structure: 
we show that an appropriate optimistic algorithm can improve its learning speed dramatically 
beyond the results one normally expects for similar problems
as the problem becomes resource-laden.
\end{abstract}

\section{INTRODUCTION}

Assume that there are $K$ jobs and at each time-step $t$ a learner must distribute the available resources  
with $M_{k,t} \geq 0$ going to job $k$, subject to a budget constraint,
\eq{
\sum_{k=1}^K M_{k,t} \leq 1.
}
The probability that the $k$th job completes in time-step $t$ is
$\min\set{1, M_{k,t} / \nu_k}$, where the unknown cut-off parameter
$\nu_k \in (0, \infty]$ determines the difficulty of job $k$.
After every time-step the resources are replenished
and all jobs are restarted regardless of whether or not they completed successfully in the previous time-step.
The goal of the learner is to maximise the expected number of jobs
that successfully complete up to some known time horizon $n$. 

Despite the simple model, the problem is surprisingly rich.  Given its
information structure, the problem belongs to the class of stochastic
partial monitoring problems, which was first studied by
\citet{AgTeAn89:pmon}%
\footnote{The name was invented later by (perhaps)
  \citep{Rustichini99}.},  where in each time step the learner receives
noisy information about a hidden ``parameter'' while trying to
maximise the sum of rewards and both the information received and the
rewards depend in a known fashion on the actions and the hidden
parameter.  While partial monitoring by now is relatively well
understood, either in the stochastic or the adversarial framework when
the action set is finite \citep{BaPaSze11,FosterR12,Bartok13}, the
case of continuous action sets has received only limited attention \citep[and references therein]{BR12}.
To illustrate the difficulty of the problem, notice that
over-assigning resources to a given job means that the job completes
with certainty and provides little information about the job's
difficulty.  On the other hand, if resources are under-assigned, then
the information received allows one to learn about the payoff
associated with all possible arms, which is reminiscent of bandit
problems where the arms have ``correlated payoffs'' (e.g.,
\citealt{FiOlGaSze10,RuVR13} and the references therein). Finally,
allocating less resources yields high-variance estimates.

        Our motivation to study this particular framework comes from
        the problem of cache allocation.
        \todok{The paper is under review, so we can't reference it} 
        In
        particular, data collected offline from existing and
        experimental allocation strategies showed a relatively good
        fit to the above parametric
        model.  
        In this problem each job is a computer process, which is
        successful in a given time-step if there were no cache misses
        (cache misses are very expensive).  Besides this specific
        resource allocation problem, we also envision other
        applications, such as load balancing in networked
        environments, or any other computing applications where some
        precious resource (bandwidth, radio spectrum, CPU, etc.)  is
        to be subdivided amongst competing processes.  In fact, we
        anticipate numerous extensions and adaptations for specific
        applications, such as in the case of bandits (see,
        \citet{BC12} for an overview of this rich literature).
        Finally, let us point out that although our problem is
        superficially similar to the so-called budgeted bandit
        problems (or, budget limited bandit problems), there are some
        major differences: in budgeted bandits, the information
        structure is still that of bandit problems and the resources
        are not replenished. Either learning stops when the budget is
        exhausted (e.g., \citealt{TTChRoJe12,DiQiZhLi13,BeKeSl13})%
        \footnote{Besides \citet{BeKeSl13}, all works consider finite
          action spaces and unstructured reward functions.  },  or
        performance is measured against the total resources consumed
        in an ongoing fashion (e.g., \citealt{GyKoSzSz07}).

The main contribution besides the introduction of a new problem is a new optimistic algorithm  for this problem that is shown to suffer poly-logarithmic regret with respect to optimal omniscient algorithm that knows the parameters $(\nu_k)_k$ in advance. 
The structure of the bound depends significantly on the problem dynamics, ranging from a (relatively) easy full-information-like setting, corresponding to a resource-laden regime, to a bandit-like setting, corresponding to the resource-scant setting. 
Again, to contrast this work to previous works, note that the results we obtain for the full-information-like setting are distinct from those possible in the finite action case, where the full-information setting allows one to learn with finite regret
\citep{AgTeAn89:pmon}.
On the technical side, we believe that our study and use of weighted estimators in situations where some samples are more informative than others might be of independent interest, too.

Problems of allocating resources to jobs were studied in the community
of architecture and operating systems. \citet{liu2004organizing} build
static profile-based allocation of L2-cache banks to different
processes using their current miss rate data. 
%
\citet{suh2002new} proposed a hit-rate optimisation using hardware
counters which used a model-based estimation of hit-rate vs allocated
cache.  However, they all assume the model is fully known and no
learning is required. \citet{bitirgen2008coordinated} used ANNs to
predict individual program performance as a function of resources.
Finally, \citet{Ipek:2008:SMC:1394608.1382172} used reinforcement
learning to allocate DRAM to multi-processors. 


\section{PRELIMINARIES}

In each time-step $t$ the learner chooses $M_{k,t} \geq 0$ subject to the constraint, $\sum_{k=1}^K M_{k,t} \leq 1$. Then all jobs are executed
and $X_{k,t} \in \set{0,1}$
indicates the success or failure of job $k$ in time-step $t$ and 
is sampled from a Bernoulli distribution with parameter $\beta(M_{k,t} / \nu_k) \defined \min\set{1, M_{k,t} / \nu_k}$.
The goal is to maximise the expected number of jobs that successfully complete, 
$\scriptstyle\sum_{k=1}^K \beta(M_{k,t} / \nu_k)$.
We define the gaps $\Delta_{j,k} = \nu_j^{-1} - \nu_k^{-1}$.
We assume throughout for convenience, and without loss of generality, that $\nu_1 < \nu_2 < \cdots < \nu_K$. It can be shown that 
the optimal allocation distributes the resources to jobs in increasing order of difficulty.
\eq{
M^*_k = \min \set{1 - \sum_{i=1}^{k-1} M^*_i,\; \nu_k}.
}
\todot{This seems a bit straight-forward for a theorem, but still maybe not so easy for the reader}
We let $\ell$ be the number of jobs that are fully allocated under the optimal policy:
$\ell = \max \set{i : M^*_i = \nu_i}$.
The overflow is denoted by $S^* = M^*_{\ell+1}$, which we assume to vanish if $\ell = K$.
The expected reward (number of completed jobs) when following the optimal allocation is
\eq{
\sum_{k=1}^K {M^*_k \over \nu_k} = \ell + {S^* \over \nu_{\ell+1}},
}
where we define $\nu_{K+1} = \infty$ in the case that $\ell = K$.
The (expected $n$-step cumulative) regret of a given allocation algorithm is the difference between the expected number of jobs that 
complete under the optimal policy and those that complete given the algorithm,
\eq{
R_n &= \E\left[\sum_{t=1}^n r_t\right], 
&r_t
& = \sum_{k=1}^K \beta(M_k^* / \nu_k) - \sum_{k=1}^K \beta(M_{k,t}/\nu_k) \\ 
&&&= \left(\ell + {S^* \over \nu_{\ell+1}} \right) - \sum_{k=1}^K \beta(M_{k,t} / \nu_k).
}
\ifsup
\else
Some proofs are omitted due to space constraints, but may be found in the supplementary material \citep{LCS14mem-bandits}.
\fi

\section{OVERVIEW OF ALGORITHM}

We take inspiration from the optimal policy for known $\nu_k$, 
	which is to fully allocate the jobs with the smallest $\nu_k$ (easiest jobs) and allocate
the remainder/overflow to the next easiest job. 
At each time-step $t$ we replace the unknown $\nu_k$ by a high-probability lower bound $\ubar \nu_{k,t-1} \leq \nu_k$. 
This corresponds to the optimistic strategy, which assumes that each job is as easy as reasonably possible.
The construction of a confidence interval about $\nu_k$ is surprisingly delicate. 
There are two main challenges. First, the function $\beta(M_{k,t} / \nu_k)$ is non-differentiable at $M_{k,t} = \nu_k$, and for $M_{k,t} \geq \nu_k$ the 
job will always complete and little information is gained.
This is addressed by always using a lower estimate of $\nu_k$ in the algorithm.
The second challenge is that $M_{k,t}$ will vary with time, so the samples $X_{k,t}$ are not identically distributed. This would normally be unproblematic,
since martingale inequalities can be applied, but the specific structure of this problem
means that a standard sample average estimator is a little weak in the sense
that its estimation accuracy can be dramatically improved.
In particular, we will propose an estimator that is able to take advantage of the fact that
the variance of $X_{k,t}$ decreases to zero as $M_{k,t}$ approaches $\nu_k$ from below.

As far as the estimates are concerned,
rather than estimate the parameters $\nu_k$, it turns out that learning the reciprocal $\nu^{-1}_k$ is both more approachable and ultimately more useful
for proving regret bounds. Fix $k$
and let $M_{k,1}, \ldots, M_{k,t} \leq \nu_k$ be a sequence of allocations with $M_{k,s} \leq \nu_k$ and 
$X_{k,s} \sim \ber{M_{k,s} / \nu_k}$. Then a natural (unbiased) estimator of $\nu^{-1}_k$ is given by
\eq{
{1 \over \hat \nu_{k,t}} \defined {1 \over t} \sum_{s=1}^t {X_{k,s} \over M_{k,s}}.
}
The estimator has some interesting properties. 
First, the random variable $X_{k,s} / M_{k,s} \in [0, 1/M_{k,s}]$ has a large range for small $M_{k,s}$, which
makes it difficult to control the error $\hat \nu^{-1}_{k,t} - \nu^{-1}_k$ via the usual Azuma/Bernstein inequalities.
Secondly, if $M_{k,s}$ is close to $\nu_k$, then the range of $X_{k,s} / M_{k,s}$ is small, which makes estimation easier. Additionally, the variance is greatly decreased
for $M_{k,s}$ close to $\nu_k$. 
This suggests that samples for which $M_{k,s}$ is large are more useful than those where $M_{k,s}$ is small, which motivates the use of the weighted estimator,
\eq{
{1 \over \hat \nu_{k,t}} \defined {\sum_{s=1}^t {w_s X_{k,s}} \over \sum_{s=1}^t w_s M_{k,s}},
}
where $w_s$ will be chosen in a data-dependent way, but with the important characteristic that $w_s$ is large for $M_{k,s}$ close to $\nu_k$.
The pseudo-code of the main algorithm is shown on Algorithm Listing \ref{alg:OAA}. It accepts as input the horizon $n$, the number of jobs, and 
a set $\set{\ubar \nu_{k,0}}_{k=1}^K$ for which $0 < \ubar \nu_{k,0} \leq \nu_k$ for each $k$. In \cref{sec:init} we present a simple 
(and efficient) algorithm that relaxes the need for the lower bounds $\ubar \nu_{k,0}$.

\begin{algorithm}[t]
\caption{Optimistic Allocation Algorithm}\label{alg:OAA}
\begin{algorithmic}[1]
\State {\bf input: } $n, K$, $\set{\ubar \nu_{k,0}}_{k=1}^K$
\State $\delta \leftarrow (nK)^{-2}$ and $\bar \nu_{k,0} = \infty$ for each $k$
\For{$t \in 1, \ldots, n$}
\State /* Optimistically choose $M_{k,t}$ using $\ubar \nu_{k,t-1}$ */
\State $(\forall k \in 1,\ldots,K)$ initialise $M_{k,t} \leftarrow 0$
\For{$i \in 1, \ldots, K$}
\State $k \leftarrow \argmin\limits_{k : M_{k,t} = 0} \ubar \nu_{k,t-1}$
\State $M_{k,t} \leftarrow \min \set{\ubar \nu_{k,t-1},\; 1 - \sum_{j=1}^K M_{j,t}}$
\EndFor
\State $(\forall k \in 1,\ldots,K)$ observe $X_{k,t}$ 
\State $(\forall k \in 1,\ldots,K)$ compute weighted estimates:
\eq{
w_{k,t} &\leftarrow {1 \over 1 - {M_{k,t} \over \bar \nu_{k,t-1}}} 
&{1 \over \hat\nu_{k,t}} &\leftarrow {\sum_{s=1}^{t} w_{k,s} X_{k,s} \over \sum_{s=1}^t w_{k,s} M_{k,s}} 
}
\State $(\forall k \in 1,\ldots,K)$ update confidence intervals:
\eq{
R_{k,t} &\leftarrow \max_{s \leq t} w_{k,s}  \qquad \hat V^2_{k,t} \leftarrow \sum_{s \leq t} {w_{k,s} M_{k,s} \over \ubar \nu_{k,t-1}} \\
\tilde \epsilon_{k,t} &\leftarrow {f(R_{k,t}, \hat V^2_{k,t},\delta) \over \sum_{s=1}^t w_{k,s} M_{k,s}}    \\
{1 \over \ubar \nu_{k,t}} &\leftarrow \min\set{{1 \over \ubar \nu_{k,t-1}}, {1 \over \hat \nu_{k,t}} + \tilde\epsilon_{k,t}} \\
{1 \over \bar \nu_{k,t}} &\leftarrow \max\set{{1 \over \bar \nu_{k,t-1}}, {1 \over \hat \nu_{k,t}} - \tilde\epsilon_{k,t}} 
}
\EndFor
\Statex 
\Function{$f$}{$R$,$V^2$, $\delta$}
\State $\delta_0 \leftarrow {\delta \over 3(R+1)^2 (V^2+1)^2}$
\State \Return 
${R+1 \over 3} \log{2 \over \delta_0}$
\Statex $\qquad +\;\;\sqrt{2(V^2+1) \log{2 \over \delta_0} + \left({R+1 \over 3}\right)^2 \log^2{2 \over \delta_0}}$
\EndFunction
\end{algorithmic}
\end{algorithm}

\begin{remark}
Later (in Lemma \ref{lem:w}) we will show that with high probability $1 \leq w_{k,s} \leq O(s)$. By definition the confidence bounds
$\ubar \nu_{k,t}$ and $\bar \nu_{k,t}$ are non-decreasing/increasing respectively. These results are sufficient to guarantee that the 
new algorithm is numerically stable. It is also worth noting that the running time of Algorithm \ref{alg:OAA} is $O(1)$ per time step, since
all sums can be computed incrementally.
\end{remark}

\section{UPPER BOUNDS ON THE REGRET}\label{sec:theorems}

The regret of \cref{alg:OAA} depends in a subtle way on the parameters $\nu_k$. 
There are four natural cases, which will appear in our main result.

\subsubsect{Case 1: Insufficient budget for any jobs}
In this case $\ell = 0$ and the optimal algorithm allocates all available resources to the easiest task, which means $M^*_1 = 1$.
Knowing that $\ell=0$, the problem can be reduced to a $K$-armed Bernoulli bandit by restricting the action space to $M_{k,t} = 1$ for all $k$.
Then a bandit algorithm such as UCB1 \citep{ACF02} will achieve logarithmic (problem dependent) 
	regret with some dependence on the gaps $\Delta_{1,k} = {1 \over \nu_1} - {1 \over \nu_k}$.
In particular, the regret looks like 
$R_n \in O\left(\sum_{k=2}^K {\log n \over \Delta_{1,k}}\right)$.

\subsubsect{Case 2: Sufficient budget for all jobs}
In this case $\ell = K$ and the optimal policy assigns $M_{k,t} = \nu_k$ for all $k$, which enjoys a reward of $K$ at each time-step.
Now \cref{alg:OAA} will choose $M_{k,t} = \ubar \nu_{k,t-1}$ for all time-steps and by \cref{thm:fast-estimator} stated below
we will have  $\ubar \nu_{k,t-1} / \nu_k \in O(1 - {1 \over t} \log n)$. Consequently, the regret may be bounded by
$R_n \in O\left(\log^2 n\right)$
with \emph{no dependence on the gaps}.

\subsubsect{Case 3: Sufficient budget for all but one job}
Now the algorithm must learn which jobs should be fully allocated. This introduces a weak dependence on the gaps $\Delta_{\ell,k}$ for $k > \ell$,
but choosing the overflow job is trivial. Again we expect the regret to be $O(\log^2 n)$, but with an additional modest dependence on the gaps.

\subsubsect{Case 4: General case}
In the completely general case even the choice of the overflow job is non-trivial. Ultimately it turns out that in this setting the problem
decomposes into two sub-problems. Choosing the jobs to fully allocate, and choosing the overflow job. The first component is fast, since
we can make use of the faster learning when fully allocating. Choosing the overflow reduces to the bandit problem as described in case 1.

Our main result is the following theorem bounding the regret of our algorithm.
\begin{theorem}\label{thm:main}
Let $\delta$ be as in the algorithm,
$\displaystyle 
\eta_k = {\min\set{1, \nu_k} / \ubar \nu_{k,0}}$,
$\tilde\delta_k = \consttdeltak$, 
$c_{k,1} = \constconek$,
$c_{k,2} = \constctwok$,
$u_{k,j} = {c_{k,1} \over \ubar \nu_{k,0} \Delta_{j,k}}$. 
Then \cref{alg:OAA} suffers regret at most
\eq{
&R_n \leq
1 
+ \sum_{k=1}^\ell c_{k,1} \eta_k(1 + \log n) \\ 
&+ \ind{\ell < K}\! \Bigg[
\sum_{k=\ell+2}^K {c_{k,2} \over \ubar \nu_{k,0} \Delta_{\ell+1,k}} 
+ \sum_{k=1}^{\ell+1} c_{k,1}\eta_k(1 + \log n) \\ 
& + \sum_{\mathclap{k=\ell+2}}^K c_{k,1} \eta_k(1 + \log u_{\ell+1,k}) 
+ \sum_{\mathclap{k = \ell+1}}^K c_{k,1} \eta_k(1 + \log u_{\ell,k}) 
\Bigg].
}
\end{theorem}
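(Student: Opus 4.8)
The plan is to follow the standard optimistic-algorithm template, specialised to the water-filling structure of \cref{alg:OAA}. First I would define the good event $G$ on which every confidence interval is valid, i.e.\ $\ubar\nu_{k,t} \le \nu_k \le \bar\nu_{k,t}$ simultaneously for all $k \le K$ and all $t \le n$. The validity of the individual intervals is exactly the content of the concentration bound built into the function $f$ (together with \cref{lem:w}, which keeps each $w_{k,s}$ in a controllable range), and a union bound over the $nK$ pairs $(k,t)$ shows that $\P{G^c}$ is small. Since the per-step regret is at most $K$ and the horizon is $n$, the choice $\delta = (nK)^{-2}$ makes the contribution of $G^c$ to the expected regret at most the leading constant $1$ in the bound; all remaining work is then carried out on $G$.

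The heart of the argument is an optimism inequality for the instantaneous regret. On $G$ every allocation satisfies $M_{k,t} \le \ubar\nu_{k,t-1} \le \nu_k$, so $\beta(M_{k,t}/\nu_k) = M_{k,t}/\nu_k$ and no resource is wasted on any job. Moreover the greedy rule in \cref{alg:OAA} produces exactly the optimal (water-filling) allocation for the surrogate difficulties $\ubar\nu_{k,t-1}$, and since $\ubar\nu_{k,t-1} \le \nu_k$ the optimal value of the surrogate problem dominates the true optimum. Combining these two facts gives
\[
r_t \le \sum_{k=1}^K M_{k,t}\!\left({1 \over \ubar\nu_{k,t-1}} - {1 \over \nu_k}\right) \le 2\sum_{k=1}^K M_{k,t}\,\tilde\epsilon_{k,t-1},
\]
where the second inequality uses that on $G$ the width of the interval for $\nu_k^{-1}$ is at most $2\tilde\epsilon_{k,t-1}$. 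Summing over $t$ and exchanging the order of summation reduces the whole problem to controlling, for each job $k$, the weighted harmonic-type sum $\sum_{t} M_{k,t}\,f(R_{k,t-1},\hat V^2_{k,t-1},\delta)\big/\sum_{s<t} w_{k,s}M_{k,s}$.

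The remaining step --- and the main obstacle --- is to evaluate these per-job sums, because their size depends on how the random, time-varying lower estimates classify each job. I would partition the time-steps for job $k$ according to the role it plays in the allocation: fully allocated, chosen as overflow, or left essentially unallocated. For jobs that are fully allocated the weights $w_{k,s}$ blow up as $\ubar\nu_{k,s}$ and $\bar\nu_{k,s}$ close in on $\nu_k$, so \cref{thm:fast-estimator} forces $\ubar\nu_{k,t-1}^{-1} - \nu_k^{-1} = O((\log n)/t)$ and the corresponding sum is $O(c_{k,1}\eta_k \log n)$; this yields the fast terms $\sum_{k\le\ell} c_{k,1}\eta_k(1+\log n)$ and, when $\ell<K$, the analogous terms over $k \le \ell+1$. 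For jobs with $k > \ell$ the contribution is governed by the gaps: once enough samples have accumulated, the ordering of the $\ubar\nu$'s becomes consistent with the true ordering, so a job $k$ can only be misclassified relative to $\ell$ or $\ell+1$ for a number of rounds controlled by $\Delta_{\ell,k}$ and $\Delta_{\ell+1,k}$. Bounding these rounds produces the $\log u_{\ell,k}$ and $\log u_{\ell+1,k}$ terms, while the genuinely bandit-like cost of identifying the single overflow job among $k \ge \ell+2$ produces the $1/\Delta_{\ell+1,k}$ terms. The delicate points are (i) showing that the misclassification events are exhausted quickly enough, which requires feeding the gap-based sample counts back into the confidence widths, and (ii) keeping the constants aligned with $c_{k,1}$, $c_{k,2}$ and $\tilde\delta_k$ throughout; these bookkeeping issues, rather than any single inequality, are where the real effort lies.
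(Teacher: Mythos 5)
Your overall architecture matches the paper's: the good event $G$ with $\delta=(nK)^{-2}$ contributing the additive $1$, \cref{thm:fast-estimator} for fully allocated jobs, a gap-based count of misclassification rounds, and a UCB-style argument via \cref{thm:slow-estimator} for the overflow. However, your central optimism inequality $r_t \le \sum_{k=1}^K M_{k,t}\bigl(\ubar\nu_{k,t-1}^{-1}-\nu_k^{-1}\bigr)$, while valid on $G$, is lossier than the theorem can tolerate, and the step after it fails. Because the $\ubar\nu_{k,t-1}$ are underestimates, the algorithm's water-filling can fully allocate strictly more than $\ell+1$ jobs (e.g.\ $\nu=(0.5,0.6,0.7)$, so $\ell=1$, but lower estimates $(0.3,0.3,0.3)$ fit all three jobs in the budget). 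Your bound then charges a width term $M_{k,t}\epsilon_{k,t-1}$ for every such job, and these rounds are \emph{not} controlled by your misclassification counts: a job $k\ge\ell+2$ fully allocated at position $\ge \ell+2$ is not ``believed easier than job $\ell$ or $\ell+1$'', so the analogue of \cref{lem:upper-bandit} (which caps $\sum_t \ind{k\in A_t^{\le j}}$ by $u_{j,k}$ only for $k>j$) does not apply; this over-allocation transient persists until the \emph{sum} of the lower estimates exceeds the budget, a duration governed by the slack between $\sum_k \nu_k$ and $1$ rather than by $\Delta_{\ell,k}$ or $\Delta_{\ell+1,k}$. Pushed through, your route yields either a term $c_{k,1}\eta_k(1+\log n)$ for \emph{every} $k$ (i.e.\ $K\log^2 n$ in place of the theorem's $\ell\log^2 n$ plus gap-dependent constants), or extra slack-dependent terms absent from the statement. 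The paper avoids this with the one-sided decomposition \eqref{eq:regret-d2}: it pairs the optimum's $\ell$ fully completed jobs only against $A_t^{\le\ell}$ and the overflow only against $B_t=\pi_t(\ell+1)$, simply discarding (as nonnegative reward) whatever the algorithm allocates at deeper positions; \cref{lem:s} ($|A_t|\ge\ell$) is what makes this pairing well defined.

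A second ingredient you never secure is a lower bound on the overflow allocation. Your $1/\Delta_{\ell+1,k}$ terms must come from \cref{thm:slow-estimator}, whose width scales as $\sqrt{c_2/(\alpha\,\ubar\nu_0\, U_\alpha(t))}$ and counts only rounds with $M_{k,s}\ge\alpha$; to terminate the wrong-overflow rounds after $v_k = c_{k,2}/(S^*\ubar\nu_{k,0}\Delta_{\ell+1,k}^2)$ of them, you need that whenever $k=B_t\notin A_t$ the algorithm assigns $M_{k,t}\ge S^*$ --- otherwise a misallocated overflow job could receive arbitrarily little resource and its confidence width would never shrink. This is exactly the second claim of \cref{lem:s} ($|A_t|=\ell$ implies $M_{B_t,t}\ge S^*$), proved from $\ubar\nu_{k,t-1}\le\nu_k$; it is a genuine structural fact that deserves proof rather than being folded into ``feeding the gap-based sample counts back into the confidence widths''. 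With the truncated decomposition and this lemma in place, the rest of your plan (harmonic sums capped at $n$ for the jobs in positions up to $\ell+1$, and at $u_{\ell,k}$, $u_{\ell+1,k}$ for $k\ge\ell+2$) does reproduce the theorem.
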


If we assume $\eta_k \in O(1)$ for each $k$ (reasonable as discussed in \cref{sec:init}), then the regret bound looks like
\eqn{
\label{eq:big-o-theorem}
&R_n \in O\Bigg(\ell \log^2 n + \sum_{k=\ell+1}^K \left(\log {1 \over \nu_k \Delta_{\ell,k}}\right)\log n  \\
\nonumber &\;\;+\sum_{k=\ell+2}^K \left(\log{1 \over \nu_k \Delta_{\ell+1,k}}\right) \log n
+ \sum_{k=\ell+1}^K {\log n \over \Delta_{\ell+1,k}} \Bigg),
}
where the first term is due to the gap between $\ubar \nu_{k,t}$ and $\nu_k$, the second due to discovering which jobs should be fully allocated, 
while the third and fourth terms are due to mistakes when choosing the overflow job.

The proof is broken into two components. In the first part we tackle the convergence of $\hat \nu_{t,k}$ to $\nu_k$ and analyse the width
of the confidence intervals, which are data-dependent and shrink substantially faster when $M_{k,t}$ is chosen
close to $\nu_k$. In the second component we decompose the regret in terms of the width of the confidence intervals. 
While we avoided large constants in the algorithm itself, in the proof we focus on legibility. Optimising the
constants would complicate an already long result.

\section{ESTIMATION}\label{sec:estimation}

We consider a single job with parameter $\nu$ and analyse the estimator and confidence intervals used
by \cref{alg:OAA}. We start by showing that the confidence intervals contain the truth with high-probability and then
analyse the rate at which the intervals shrink as more more data is observed. Somewhat surprisingly the rate has
a strong dependence on the data with larger allocations leading to faster convergence.

Let $\set{\calF_t}_{t=0}^\infty$ be a filtration and let $M_1, \ldots, M_n$ be a sequence of positive random variables such that $M_t$ is $\calF_{t-1}$-measurable.
Define $X_t$ to be sampled from a Bernoulli distribution with parameter $\beta(M_t / \nu)$ for some $\nu \in [\ubar \nu_0, \infty]$ and
assume that $X_t$ is $\calF_t$-measurable.
Our goal is to construct a sequence of confidence intervals $\set{[\ubar \nu_t, \bar \nu_t]}_{t=1}^n$ such that $\nu \in [\ubar \nu_t, \bar \nu_t]$ with
high probability and $\bar \nu_t - \ubar \nu_t \to 0$ as fast as possible. We assume a known lower bound $\ubar \nu_0 \leq \nu$ and define $\bar \nu_0 = \infty$.
Recall that the estimator used by \cref{alg:OAA} is defined by
\eq{
w_s & = {1 \over 1 - {M_t \over \bar \nu_{t-1}}}\,, &
{1 \over \hat \nu_t} &= {\sum_{s=1}^t w_s X_s \over \sum_{s=1}^t w_s M_s}\,.
}
Fix a number $0<\delta<1$ and define
$\tilde \epsilon_t = f(R_t, \hat V_t^2,\delta) / \sum_{s=1}^t w_s M_s$,
where the function $f$ is defined in \cref{alg:OAA}, $R_t = \max_{s \leq t} w_s$ and $\hat V_t^2 = \sum_{s=1}^t {w_s M_s \over \ubar \nu_{t-1}}$.
The lower and upper confidence bounds on $\nu^{-1}$ are defined by,
\eq{
{1 \over \ubar \nu_t}\! =\! \min\set{{1 \over \ubar \nu_{t-1}},{1 \over \hat \nu_t} + \tilde \epsilon_t},\;
{1 \over \bar \nu_t}\! =\! \max\set{{1 \over \bar \nu_{t-1}}, {1 \over \hat \nu_t} - \tilde \epsilon_t}.
}
We define $\epsilon_t = \ubar \nu_{t}^{-1} - \bar \nu_{t}^{-1}$ to be the (decreasing) width of the
confidence interval.
Note that both $\ubar \nu_t$ and $\bar \nu_t$ depend on $\delta$, although this dependence is not shown to minimise clutter.

\begin{theorem}\label{thm:valid-confidence}
If $M_s$ is chosen such that $M_s \leq \ubar \nu_{s-1}$ for all $s$ 
then $\P{ \exists s \leq t \text{ s.t. } \nu \not\in  [\ubar \nu_s, \bar \nu_s] } \le t \delta$
holds for any $0<\delta<1$.
\end{theorem}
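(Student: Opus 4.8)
The plan is to reduce the uniform-in-$s$ statement to a single-time concentration bound for a martingale and then union bound over $s\le t$. Unrolling the recursions gives $\ubar\nu_t^{-1}=\min\set{\ubar\nu_0^{-1},\min_{u\le t}(\hat\nu_u^{-1}+\tilde\epsilon_u)}$ and $\bar\nu_t^{-1}=\max\set{\bar\nu_0^{-1},\max_{u\le t}(\hat\nu_u^{-1}-\tilde\epsilon_u)}$, and since $\ubar\nu_0^{-1}\ge\nu^{-1}$ and $\bar\nu_0^{-1}=0\le\nu^{-1}$, the event $\set{\exists s\le t:\nu\notin[\ubar\nu_s,\bar\nu_s]}$ is contained in $\set{\exists s\le t:\,|\hat\nu_s^{-1}-\nu^{-1}|>\tilde\epsilon_s}$. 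Writing $Z_u=w_u(X_u-M_u/\nu)$ and $A_s=\sum_{u\le s}w_uM_u$, one has $\hat\nu_s^{-1}-\nu^{-1}=A_s^{-1}\sum_{u\le s}Z_u$, so the per-step failure is exactly $\set{|\sum_{u\le s}Z_u|>f(R_s,\hat V_s^2,\delta)}$. It therefore suffices to bound this by $\delta$ for each fixed $s$; the union over $s\le t$ then yields the claimed $t\delta$. To break the circular dependence described below, I introduce the stopping time $\tau$ equal to the first time the interval fails, so that on $\set{\tau\ge s}$ one has $\ubar\nu_{u-1}\le\nu\le\bar\nu_{u-1}$ for all $u\le s$, and I work with the stopped increments $Z_u\ind{\tau\ge u}$, which coincide with $Z_u$ up to time $s$ on $\set{\tau\ge s}$ and vanish otherwise.

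The core of the argument is to bound the predictable range and quadratic variation of $S_s=\sum_{u\le s}Z_u$ by the \emph{observable} quantities $R_s$ and $\hat V_s^2$. Because $M_u\le\ubar\nu_{u-1}\le\nu$ on $\set{\tau\ge s}$, the cutoff is inactive, $\E[X_u\mid\calF_{u-1}]=M_u/\nu$, and $(Z_u)$ is a martingale difference sequence. The definition $w_u=(1-M_u/\bar\nu_{u-1})^{-1}$ gives $w_u(1-M_u/\bar\nu_{u-1})=1$, so the inequality $\bar\nu_{u-1}\ge\nu$ yields the crucial bound $w_u(1-M_u/\nu)\le1$. From this, $|Z_u|\le w_u\le R_s$, and, more importantly, $\Var(Z_u\mid\calF_{u-1})=w_u^2(M_u/\nu)(1-M_u/\nu)\le w_u M_u/\nu\le w_uM_u/\ubar\nu_{s-1}$, so that $\sum_{u\le s}\Var(Z_u\mid\calF_{u-1})\le\hat V_s^2$. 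The weights are engineered precisely so that the unknown true variance is dominated by the computable proxy $\hat V_s^2$, which is the mechanism behind the data-dependent shrinkage.

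It remains to handle the fact that the threshold $f$ is evaluated at the \emph{random} arguments $R_s,\hat V_s^2$, which rules out a single application of a fixed-threshold Bernstein inequality and instead calls for a peeling argument. Observe that $f(R,V^2,\delta)$ is exactly the inversion of Freedman's inequality: it solves $\epsilon^2=2\log(2/\delta_0)\bigl((V^2+1)+(R+1)\epsilon/3\bigr)$ with $\delta_0=\delta/(3(R+1)^2(V^2+1)^2)$, and is increasing in $R$ and $V^2$. I partition according to the integers $i=\lceil R_s\rceil$ and $j=\lceil\hat V_s^2\rceil$: on the cell $\set{i-1\le R_s<i,\ j-1\le\hat V_s^2<j}$ the increments are bounded by $i$, the quadratic variation by $j$, and monotonicity gives $f(R_s,\hat V_s^2,\delta)\ge g(i,j)$, where $g(i,j)$ is the deterministic Freedman threshold at level $\delta_{i,j}=\delta/(3i^2j^2)$. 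For each fixed $(i,j)$ I apply Freedman's inequality to the martingale stopped when either the range exceeds $i$ or the quadratic variation exceeds $j$ (so the bounds are deterministic), obtaining at most $\delta_{i,j}/2$ per tail. Summing both tails over $i,j\ge1$ gives $\sum_{i,j\ge1}\delta_{i,j}=\tfrac{\delta}{3}(\pi^2/6)^2<\delta$, which is the required per-$s$ bound.

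The main obstacle is the circularity in the second step: the range and variance can be controlled by $R_s$ and $\hat V_s^2$ only when the interval is \emph{already} valid ($\bar\nu_{u-1}\ge\nu$ and $\ubar\nu_{s-1}\le\nu$), which is what we are trying to prove. Resolving this by stopping at the first failure $\tau$ and verifying that the identity $w_u(1-M_u/\nu)\le1$ survives the stopping is the delicate point; once it is in place, the peeling of the third step is essentially bookkeeping to make the grid of confidence levels sum to $\delta$.
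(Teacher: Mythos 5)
Your proposal is correct and follows essentially the same route as the paper's proof: your first-failure stopping time $\tau$ is just a rigorous repackaging of the paper's induction on the nested good events $F_{t-1}$, your variance-domination step $w_u(1-M_u/\nu)\le 1$ (hence true predictable variance $\le \hat V_s^2$ on the validity event) is exactly the paper's chain (a)--(e), and your peeling over $\ceil{R_s}$ and $\ceil{\hat V_s^2}$ with levels $\delta/(3i^2j^2)$ reproduces verbatim the paper's appendix result (\cref{thm:peeled-bernstein}, proved via the martingale Bernstein inequality and its inversion). The only cosmetic difference is that you union-bound over the first failure time directly where the paper accumulates $\delta$ per step inductively; the two give the same $t\delta$.
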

\begin{proof}[Proof of \cref{thm:valid-confidence}]
Let $F_t$ be the event
$F_t =\left\{ \nu \in [\ubar \nu_t, \bar \nu_t] \right\}$.
Note that since $ [\ubar \nu_t, \bar \nu_t] \subset  [\ubar \nu_{t-1}, \bar \nu_{t-1}] \subset \cdots \subset
 [\ubar \nu_0, \bar \nu_0]$, $F_t \subset F_{t-1} \subset \cdots \subset F_0$.
Hence, $F_t = \cap_{s\le t} F_s$
and
 it suffices to prove that $\P{ F_t^c} \leq t\delta$.%
\footnote{For an event $E$, we use $E^c$ to denote its complement.}

Define $Y_s = w_s X_s - {w_s M_s \over \nu}$ and $S_t = \sum_{s=1}^t Y_s$
and $V_t^2 = \sum_{s=1}^t \Var[Y_s|\calF_{s-1}]$. 
We proceed by induction. Assume $\P{ F_{t-1}^c} \leq (t - 1)\delta$, which is trivial for $t = 1$.
Now, on $F_{t-1}$,
\eq{
&V_t^2 
\sr{(a)}= \sum_{s=1}^t \Var[Y_s|\calF_{s-1}] 
\sr{(b)}= \sum_{s=1}^t {w_s^2 M_s \over \nu}\left(1 - {M_s \over \nu}\right) \\
&\sr{(c)}= \sum_{s=1}^t {w_s M_s \over \nu} \left({1 - {M_s \over \nu} \over 1 - {M_s \over \bar \nu_{s-1}}}\right) 
\sr{(d)}\leq \sum_{s=1}^t {w_s M_s \over \nu} 
\sr{(e)}\leq \hat V_t^2,
}
where (a) is the definition of $V_t^2$, (b) follows since $w_s$ is $\calF_{s-1}$-measurable,
(c) follows by substituting the definition of $w_s$,
(d) and (e) are true since given $F_{t-1}$ we know that $\ubar \nu_{s-1} \leq \nu \leq \bar \nu_{s-1}$.
Therefore $f(R_t, V_t^2,\delta) \leq f(R_t, \hat V_t^2,\delta)$, which follows since $f$ is monotone increasing in its second argument. Therefore, 
\eqn{
\nonumber &\P{\left|{1 \over \hat \nu_t} - {1 \over \nu}\right| \geq \tilde \epsilon_t \wedge F_{t-1}} \\
\nonumber =~& \P{\left|{\sum_{s=1}^t w_s X_s \over \sum_{s=1}^t w_s M_s} - {1 \over \nu}\right| \geq {f(R_t, \hat V_t^2,\delta) \over \sum_{s=1}^t w_s M_s} \wedge F_{t-1}} \\
\nonumber \leq~& \P{\left|\sum_{s=1}^t w_s X_s - \sum_{s=1}^t {w_s M_s \over \nu}\right| \geq f(R_t, V_t^2,\delta) \wedge F_{t-1}} \\
\label{eq:introduce_Y} =~& \P{|S_t| \geq f(R_t, V_t^2,\delta) \wedge F_{t-1}}.
}
By the union bound we have
\eq{
&\P{|S_t| \geq f(R_t, \hat V_t^2,\delta) \vee  F_{t-1}^c} \\
\leq~& \P{|S_t| \geq f(R_t, V_t^2,\delta) \wedge F_{t-1}} + \P{ F_{t-1}^c} \\
\sr{(a)}\leq~& \delta + \P{ F_{t-1}^c} 
\leq \delta + (t - 1)\delta 
= t\delta\,,
}
\ifsup
where (a) follows from \cref{thm:peeled-bernstein} in the Appendix.
\else
where (a) follows from a martingale version of Bernstein's inequality adapted from
\citealt{Ber46} and \citealt{Fre75}. See the supplementary material for details.
\fi
Therefore $\P{|S_t| \leq f(R_t, V_t^2,\delta) \wedge F_{t-1}} \geq 1 -
t\delta$ and so with probability at least $1 - t\delta$ we have that $F_{t-1}$ and
\eq{
\left|{1 \over \hat \nu_t} - {1 \over \nu}\right| \leq {f(R_t, \hat V_t^2,\delta) \over \sum_{s=1}^t w_s M_s} = \tilde\epsilon_t,
}
in which case
\eq{
{1 \over \ubar \nu_t} = \min\set{{1 \over \ubar \nu_{t-1}}, {1 \over \hat \nu_t} + \tilde \epsilon_t} \geq {1 \over \nu}~,
}
and similarly ${1 \over \bar \nu_t} \leq {1 \over \nu}$, which implies $F_t$. 
Therefore $\P{ F_t^c} \leq t\delta$ as required. 
\end{proof}


We now analyse
the width $\epsilon_t \equiv \ubar \nu_t^{-1} - \bar \nu_t^{-1}$ of the confidence interval 
obtained after $t$ samples are observed.
We say that a job is {\it fully allocated} at time-step $s$ if $M_s = \ubar \nu_{s-1}$.
The first theorem shows that the width $\epsilon_t$ drops with order
$O({1 / T(t)})$,
where $T(t) = \sum_{s=1}^t \ind{M_s = \ubar \nu_{s-1}}$ is the number of fully allocated time-steps.
The second theorem shows that for any $\alpha > 0$, the width $\epsilon_t$ drops with order
$O(\sqrt{1 /(\alpha U_\alpha(t))})$,
where $U_\alpha(t) = \sum_{s=1}^t \ind{M_s \geq \alpha}$.
The dramatic difference in speeds is due to the low variance $\Var[X_t|\calF_{t-1}]$ when $M_t$ is chosen close to $\nu$. 
For the next results define $\eta = \min \set{1, \nu} / \ubar \nu_0$ and 
$\tilde\delta = \consttdelta$.

\begin{theorem}\label{thm:fast-estimator}
$\displaystyle \epsilon_t \leq {c_1 \over \ubar \nu_0 (T(t) + 1)}$ where $c_1 = \constcone$.
\end{theorem}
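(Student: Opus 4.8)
The plan is to reduce the width to the confidence radius and then bootstrap. By construction $\frac{1}{\ubar\nu_t}\le\frac{1}{\hat\nu_t}+\tilde\epsilon_t$ and $\frac{1}{\bar\nu_t}\ge\frac{1}{\hat\nu_t}-\tilde\epsilon_t$, so subtracting gives $\epsilon_t\le 2\tilde\epsilon_t = 2f(R_t,\hat V_t^2,\delta)/D_t$, where I write $D_t=\sum_{s=1}^t w_s M_s$ and note that $\hat V_t^2 = D_t/\ubar\nu_{t-1}$ ties the variance proxy to the very same denominator. I would prove the bound by (strong) induction on $t$. The regime $T(t)+1\le c_1$ is immediate, since $\epsilon_t\le\ubar\nu_t^{-1}\le\ubar\nu_0^{-1}$ already forces $\epsilon_t\le c_1/(\ubar\nu_0(T(t)+1))$ whenever $c_1\ge T(t)+1$; all the work is in the complementary regime $T(t)\ge c_1$.

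The engine of the fast rate is the identity that on a fully-allocated step ($M_s=\ubar\nu_{s-1}$) one has $w_s=1/(\epsilon_{s-1}\ubar\nu_{s-1})$, hence $w_s M_s = 1/\epsilon_{s-1}$: this follows by substituting $M_s=\ubar\nu_{s-1}$ into $w_s=(1-M_s/\bar\nu_{s-1})^{-1}$ and rewriting $1-\ubar\nu_{s-1}/\bar\nu_{s-1}=\epsilon_{s-1}\ubar\nu_{s-1}$. Enumerating the fully-allocated steps as $s_1<\cdots<s_{T(t)}$, the inductive hypothesis applied at index $s_j-1$ (where $T(s_j-1)=j-1$) gives $w_{s_j}M_{s_j}=1/\epsilon_{s_j-1}\ge \ubar\nu_0\, j/c_1$, so that
\[
D_t \ge \sum_{j=1}^{T(t)} \frac{\ubar\nu_0\, j}{c_1} = \frac{\ubar\nu_0\, T(t)(T(t)+1)}{2c_1}.
\]
This quadratic-in-$T(t)$ growth of the aggregated weight — the weights grow precisely as the interval they are shrinking narrows — is what upgrades the usual $1/\sqrt{T}$ deviation into the claimed $1/T$ rate; the plain bound $D_t\ge\ubar\nu_0 T(t)$ would only yield the slow rate of the second estimation theorem.

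It remains to control $f(R_t,\hat V_t^2,\delta)$. Writing $L=\log(2/\delta_0)$ with $\delta_0=\delta/(3(R_t+1)^2(\hat V_t^2+1)^2)$, I would bound $R_t$ via \cref{lem:w} and use $\hat V_t^2=D_t/\ubar\nu_{t-1}\le D_t/\ubar\nu_0$, then check that the constants in $\tilde\delta=\delta/(48\eta^4 n^6)$ are chosen exactly so that $(R_t+1)(\hat V_t^2+1)\le 4\eta^2 n^3$, whence $L\le\log(2/\tilde\delta)$ and every logarithmic factor collapses into $c_1$. The variance part of $f$ then contributes
\[
\frac{\sqrt{2(\hat V_t^2+1)L}}{D_t}\le \sqrt{\frac{2L\cdot 2c_1}{\ubar\nu_0^2\, T(t)(T(t)+1)}}=\frac{2\sqrt{L c_1}}{\ubar\nu_0\sqrt{T(t)(T(t)+1)}},
\]
which, after doubling, is at most $c_1/(\ubar\nu_0(T(t)+1))$ as soon as $c_1\gtrsim L$ (true by definition of $c_1$), using that in the nontrivial regime $T(t)\ge c_1$ the factor $(T(t)+1)/T(t)$ is essentially $1$. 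Combined with $\epsilon_t\le 2\tilde\epsilon_t$, this closes the induction.

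The main obstacle is the deviation term $\frac{R_t+1}{3}L/D_t$, and with it the pinning of the numerical constant. A crude bound such as $R_t\le 2D_t/\ubar\nu_0$ only shows this term is $O(L/\ubar\nu_0)$, a constant that does not decay in $T(t)$; the fact to exploit is that a large weight can arise only after the interval is already narrow, i.e.\ only once $T(t)$ is large, so that this term is genuinely dominated by the variance term and absorbed by the slack between the factor actually required ($\approx 16L$) and the chosen $c_1=27\log(2/\tilde\delta)$. Carrying the ``$+1$''s, the $\sqrt{a+b}\le\sqrt a+\sqrt b$ losses, and the reduction $\log(2/\delta_0)\to\log(2/\tilde\delta)$ through the definitions of $\eta$ and $\tilde\delta$ is where the real care lies; the two-regime split ($T(t)+1\le c_1$ versus $T(t)\ge c_1$) is what keeps all of these under control.
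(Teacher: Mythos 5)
Your proposal is correct and follows essentially the same route as the paper's own proof: the reduction $\epsilon_t \le 2f(R_t,\hat V_t^2,\delta)/\sum_{s\le t} w_sM_s$ with the logarithm collapsed to $\log(2/\tilde\delta)$ via $R_t \le 1/(\ubar\nu_0\epsilon_{t-1})$ and $\epsilon_t \ge 1/(t\min\{1,\nu\})$ (this is \cref{lem:eps}), the induction through the identity $w_sM_s = 1/\epsilon_{s-1}$ on fully allocated steps yielding $\sum_{s\le t} w_sM_s \ge \ubar\nu_0\,T(t)(T(t)+1)/(2c_1)$ (the paper's \eqref{eq:epsstar}), and the range term dispatched by the dichotomy ``either $\epsilon_{t-1}$ is already below the target, so monotonicity finishes, or else $R_t \le 1/(\ubar\nu_0\epsilon_{t-1})$ is small,'' which you state in words and the paper executes explicitly. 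Your extra split on $T(t)+1\le c_1$ and the intermediate bound $(R_t+1)(\hat V_t^2+1)\le 4\eta^2 n^3$ are cosmetic variants of the paper's steps, and your schematic constant bookkeeping is no looser than the paper's own.
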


\begin{theorem}\label{thm:slow-estimator}
$\displaystyle\epsilon_t \leq \sqrt{c_2 \over \alpha \ubar \nu_0 U_\alpha(t)}$ where $c_2 = \constctwo$.
\end{theorem}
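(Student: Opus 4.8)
The plan is to bound the width deterministically and then untangle a self-referential dependence on the largest weight. Write $W_t = \sum_{s=1}^t w_s M_s$, so $\tilde\epsilon_t = f(R_t,\hat V_t^2,\delta)/W_t$; since the update intersects the previous interval with $[\hat\nu_t^{-1}-\tilde\epsilon_t,\,\hat\nu_t^{-1}+\tilde\epsilon_t]$, we have the deterministic bound $\epsilon_t \le 2\tilde\epsilon_t$. First I would collect three elementary facts, all valid under the standing assumption $M_s \le \ubar\nu_{s-1}$. (i) Since $w_s \ge 1$, $W_t \ge \sum_{s : M_s \ge \alpha} M_s \ge \alpha\, U_\alpha(t)$. (ii) Since $\ubar\nu$ is non-decreasing, $\hat V_t^2 = W_t/\ubar\nu_{t-1} \le W_t/\ubar\nu_0$. (iii) The key self-referential weight bound: from $\epsilon_{s-1} = (\bar\nu_{s-1}-\ubar\nu_{s-1})/(\ubar\nu_{s-1}\bar\nu_{s-1})$ we get $1 - \ubar\nu_{s-1}/\bar\nu_{s-1} = \epsilon_{s-1}\ubar\nu_{s-1}$, so $M_s \le \ubar\nu_{s-1}$ gives $w_s = (1-M_s/\bar\nu_{s-1})^{-1} \le (\epsilon_{s-1}\ubar\nu_{s-1})^{-1} \le (\epsilon_{s-1}\ubar\nu_0)^{-1}$; as $\epsilon_s$ is non-increasing, maximising over $s \le t$ yields $R_t \le 1/(\epsilon_{t-1}\ubar\nu_0) \le 1/(\epsilon_t\ubar\nu_0)$.

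Next I would unfold $f$. Using $\sqrt{a+b} \le \sqrt a + \sqrt b$ on its second argument, $f(R_t,\hat V_t^2,\delta) \le \tfrac{2}{3}(R_t+1)L + \sqrt{2(\hat V_t^2+1)L}$ with $L = \log(2/\delta_0)$. Dividing by $W_t$ and invoking (i)–(ii), the variance part is already at the target rate, $\sqrt{2(\hat V_t^2+1)L}/W_t \lesssim \sqrt{L/(\ubar\nu_0\alpha U_\alpha(t))}$, whereas the range part $\tfrac{2}{3}(R_t+1)L/W_t$ does not vanish on its own — this is the crux. A large weight $R_t$ corresponds to a step with $M_s$ very close to $\bar\nu_{s-1}\approx\nu$, so crude bounds such as $R_t = O(n)$ from \cref{lem:w} leave a term that is constant in $U_\alpha(t)$ and hence useless. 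The resolution is to feed in (iii): substituting $R_t \le 1/(\epsilon_t\ubar\nu_0)$ converts $\epsilon_t \le 2\tilde\epsilon_t$ into $\epsilon_t \le B_0/\epsilon_t + B_1$, where $B_0 = \tfrac{4L}{3\ubar\nu_0 W_t}$ and $B_1 = \tfrac{4L}{3\alpha U_\alpha(t)} + \tfrac{4\sqrt L}{\sqrt{\ubar\nu_0\alpha U_\alpha(t)}}$.

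Then I would clear the denominator to obtain the quadratic $\epsilon_t^2 - B_1\epsilon_t - B_0 \le 0$, giving $\epsilon_t \le B_1 + \sqrt{B_0}$. Here $\sqrt{B_0} = O(\sqrt{L/(\ubar\nu_0\alpha U_\alpha(t))})$ matches the claimed rate, the second piece of $B_1$ is of the same order, and the first piece is the lower-order $O(L/(\alpha U_\alpha(t)))$; collecting them yields $\epsilon_t \le \sqrt{c_2/(\alpha\ubar\nu_0 U_\alpha(t))}$. Two routine loose ends remain. First, $L = \log(2/\delta_0)$ must be traded for $\log(2/\tilde\delta)$: since $\delta_0 = \delta/(3(R_t+1)^2(\hat V_t^2+1)^2)$ and $R_t,\hat V_t^2$ are polynomially bounded in $n$ and $\eta$ (via \cref{lem:w}), the factor $\eta^4 n^6$ in $\tilde\delta = \delta/(48\eta^4 n^6)$ is chosen precisely so that $\delta_0 \ge \tilde\delta$, hence $L \le \log(2/\tilde\delta)$. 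Second, the estimate $\hat V_t^2+1 \le 2W_t/\ubar\nu_0$ needs $W_t \ge \ubar\nu_0$; in the complementary regime $U_\alpha(t) \le W_t/\alpha < \ubar\nu_0/\alpha$, and the trivial deterministic bound $\epsilon_t \le \ubar\nu_0^{-1}$ (from $\ubar\nu_t^{-1} \le \ubar\nu_0^{-1}$ and $\bar\nu_t^{-1} \ge 0$) already beats the claim because $c_2 \ge 1$.

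I expect the main obstacle to be exactly the non-vanishing range term $\tfrac{2}{3}(R_t+1)L/W_t$: every approach that bounds the maximal weight directly fails, and the entire argument hinges on trading this term against $\epsilon_t$ itself through $R_t \le 1/(\epsilon_t\ubar\nu_0)$ and solving the resulting quadratic. The remaining work — tightening the $\sqrt{a+b}$ and factor-$2$ slack to land on exactly $c_2 = 6\log(2/\tilde\delta)$ — is tedious but routine, and parallels the constant-tracking in the companion \cref{thm:fast-estimator}.
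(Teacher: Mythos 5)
Your proposal is correct and follows essentially the same route as the paper's proof: the deterministic bound $\epsilon_t \le 2\tilde\epsilon_t$, the unfolding of $f$ via $\sqrt{a+b}\le\sqrt a+\sqrt b$, the lower bound $\sum_{s=1}^t w_s M_s \ge \alpha U_\alpha(t)$ from $w_s\ge 1$, the reduction of $\log(2/\delta_0)$ to $\log(2/\tilde\delta)$ (all of which the paper packages as \cref{lem:eps}), and --- the crux you correctly identify --- trading the range term against the width itself via $R_t \le 1/(\epsilon_{t-1}\ubar\nu_0)$ from \cref{lem:w}. The only differences are cosmetic: you close the self-referential inequality by solving the quadratic $\epsilon_t^2 - B_1\epsilon_t - B_0 \le 0$, whereas the paper performs the equivalent case split (if $\epsilon_{t-1}$ already meets the target, invoke monotonicity of $\epsilon_t$; otherwise substitute the assumed lower bound on $\epsilon_{t-1}$), and your final constant-collection is no looser than the paper's own, which carries the same slack in landing on $c_2 = \constctwo$.
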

The proofs are based on the following lemma that collects some simple observations:
\begin{lemma}\label{lem:w}
The following hold for any $t\ge 1$:
\begin{enumerate}[noitemsep,topsep=0pt,parsep=0pt,partopsep=0pt]
\item \label{lem:w1} $w_t M_t \leq {1 \over \epsilon_{t-1}}$, with equality if $M_t = \ubar \nu_{t-1}$.
\item \label{lem:w2} $1 \leq R_t \leq {1 \over \ubar \nu_0 \epsilon_{t-1}}$.
\item \label{lem:w3} $\epsilon_t \geq {1 \over t\min\set{1,\nu}}$.
\item \label{lem:w4} $1 - {\ubar \nu_t \over \nu} \leq \ubar \nu_t \epsilon_t$.
\end{enumerate}
\end{lemma}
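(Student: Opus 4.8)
The plan is to prove the four claims in the stated order, using each for the next, and leaning throughout on two monotonicity facts that are immediate from the $\min$/$\max$ updates: $\ubar\nu_t^{-1}$ is non-increasing in $t$ (so $\ubar\nu_t \ge \ubar\nu_0$), while $\bar\nu_t^{-1}$ is non-decreasing, whence the width $\epsilon_t = \ubar\nu_t^{-1} - \bar\nu_t^{-1}$ is non-increasing.

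For claim \ref{lem:w1} I would write $w_t M_t = M_t/(1 - M_t/\bar\nu_{t-1}) = (M_t^{-1} - \bar\nu_{t-1}^{-1})^{-1}$ and invoke the hypothesis $M_t \le \ubar\nu_{t-1}$, i.e.\ $M_t^{-1} \ge \ubar\nu_{t-1}^{-1}$, to get $M_t^{-1} - \bar\nu_{t-1}^{-1} \ge \ubar\nu_{t-1}^{-1} - \bar\nu_{t-1}^{-1} = \epsilon_{t-1}$; taking reciprocals gives $w_t M_t \le 1/\epsilon_{t-1}$, with equality exactly when $M_t^{-1} = \ubar\nu_{t-1}^{-1}$, that is $M_t = \ubar\nu_{t-1}$. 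For claim \ref{lem:w2}, the lower bound is immediate since every $w_s = (1 - M_s/\bar\nu_{s-1})^{-1} \ge 1$. For the upper bound I would note the identity $\ubar\nu_{s-1}\epsilon_{s-1} = 1 - \ubar\nu_{s-1}/\bar\nu_{s-1}$, so $M_s \le \ubar\nu_{s-1}$ yields $1 - M_s/\bar\nu_{s-1} \ge \ubar\nu_{s-1}\epsilon_{s-1}$ and hence $w_s \le 1/(\ubar\nu_{s-1}\epsilon_{s-1})$; the two monotonicity facts ($\ubar\nu_{s-1} \ge \ubar\nu_0$ and $\epsilon_{s-1} \ge \epsilon_{t-1}$ for $s \le t$) then bound each $w_s$ by $1/(\ubar\nu_0 \epsilon_{t-1})$, and maximising over $s \le t$ gives the bound on $R_t$.

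Claim \ref{lem:w4} is a one-line identity plus validity: since $\ubar\nu_t\epsilon_t = 1 - \ubar\nu_t/\bar\nu_t$, the inequality $1 - \ubar\nu_t/\nu \le \ubar\nu_t\epsilon_t$ rearranges (dividing by $\ubar\nu_t > 0$) to $\bar\nu_t \ge \nu$, i.e.\ to the upper confidence bound being correct. This is precisely the event controlled in \cref{thm:valid-confidence}, so \ref{lem:w4} holds on that event.

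The crux is the lower bound \ref{lem:w3}, and the difficulty is that the $\min$/$\max$ clipping can shrink $\epsilon_t$ abruptly in a single step, so the easy estimate $\epsilon_t \le 2\tilde\epsilon_t$ points the wrong way and cannot simply be reversed. I would argue by induction on $t$ for the equivalent form $1/\epsilon_t \le t\min\set{1,\nu}$, seeded by $1/\epsilon_0 = \ubar\nu_0 \le \min\set{1,\nu}$ (this is where the standing condition $\ubar\nu_0 \le \min\set{1,\nu}$, i.e.\ $\eta \ge 1$, is used). The key input is a floor on the half-width from the variance term of $f$: since $\log(2/\delta_0) \ge 1$ one has $f(R_t,\hat V_t^2,\delta) \ge \sqrt{2\hat V_t^2} = \sqrt{2\sum_{s\le t} w_s M_s/\ubar\nu_{t-1}}$, whence $\tilde\epsilon_t \ge (\ubar\nu_{t-1}\sum_{s\le t} w_s M_s)^{-1/2}\sqrt{2}$. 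Bounding $\sum_{s\le t} w_s M_s$ through claim \ref{lem:w1} and using the allocation cap $M_s \le \min\set{1,\ubar\nu_{s-1}} \le \min\set{1,\nu}$ (the last step on the validity event of \cref{thm:valid-confidence}) lets me control the per-step growth of $1/\epsilon_t$ and close the induction. The main obstacle, and where the real care is needed, is exactly in quantifying how much the clipping in the updates of $\ubar\nu_t$ and $\bar\nu_t$ can reduce $\epsilon_t$ relative to $\epsilon_{t-1}$, and in carrying the $\min\set{1,\nu}$ factor cleanly through that step.
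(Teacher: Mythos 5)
Your treatment of claims \ref{lem:w1}, \ref{lem:w2} and \ref{lem:w4} is correct and essentially the paper's own: claim \ref{lem:w1} via the identity $w_tM_t = (M_t^{-1} - \bar\nu_{t-1}^{-1})^{-1}$ is a slightly more direct route to the same bound (the paper bounds $w_s \le 1/(\ubar\nu_{s-1}\epsilon_{s-1})$ and multiplies by $M_s \le \ubar\nu_{s-1}$), claim \ref{lem:w2} matches, and for claim \ref{lem:w4} you are right --- and more explicit than the paper --- that it is exactly the statement $\bar\nu_t \ge \nu$, i.e.\ validity of the upper confidence bound, which holds on the event of \cref{thm:valid-confidence}. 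Your observation that the base case of claim \ref{lem:w3} needs $\ubar\nu_0 \le \min\set{1,\nu}$ and not merely $\ubar\nu_0 \le \nu$ is also a fair point that the paper glosses over.

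The genuine gap is claim \ref{lem:w3}, on two counts. First, you never close the induction: the step you yourself flag as ``the main obstacle'' --- lower-bounding $\epsilon_t$ against the clipping in the $\min$/$\max$ updates --- is the entire content of the claim, and deferring it means nothing is proved. The paper dispatches it with the one-step inequality $\epsilon_t \ge \min\set{\epsilon_{t-1},\; R_t/\sum_{s\le t} w_sM_s}$, obtained from the definition of $\epsilon_t$ together with the naive bound $f(R,V^2,\delta) \ge R$ (so $\tilde\epsilon_t \ge R_t/\sum_{s\le t}w_sM_s$); any complete write-up must supply this inequality or a substitute for it. Second, your chosen floor on $f$ is the wrong term of the Bernstein bound and does not survive the induction quantitatively. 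From $f \ge \sqrt{2\hat V_t^2}$ you get $\tilde\epsilon_t \ge \sqrt{2/(\ubar\nu_{t-1}\sum_{s\le t} w_sM_s)}$, and bounding $\sum_{s\le t} w_sM_s \le \min\set{1,\nu}\,t(t+1)/2$ via claim \ref{lem:w1} and the inductive hypothesis yields $\tilde\epsilon_t \ge 2/\sqrt{\ubar\nu_{t-1}\min\set{1,\nu}\,t(t+1)}$, which dominates the target $1/(t\min\set{1,\nu})$ only when $\ubar\nu_{t-1} \le 4\min\set{1,\nu}\,t/(t+1)$. Since $\ubar\nu_{t-1}$ is non-decreasing and can grow toward $\nu$ (e.g.\ when $\hat\nu_t^{-1}$ stays near zero), this fails outright for large $\nu$: the $\ubar\nu_{t-1}$ in your denominator leaks a factor you cannot absorb. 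The paper's route avoids both problems simultaneously: use the range term, $\tilde\epsilon_t \ge R_t/\sum_{s\le t}w_sM_s$, and bound the denominator by $\sum_{s\le t}w_sM_s \le tR_t\min\set{1,\nu}$ using $w_s \le R_t$ and $M_s \le \min\set{1,\nu}$ (the latter on the validity event, as you correctly note) --- the $R_t$ cancels, giving $\tilde\epsilon_t \ge 1/(t\min\set{1,\nu})$ uniformly in $\ubar\nu_{t-1}$, after which $\epsilon_t \ge \min\set{\epsilon_{t-1},\, 1/(t\min\set{1,\nu})}$ and the trivial induction finish the claim. So: replace the variance-term floor and the claim-\ref{lem:w1} bound on $\sum_{s\le t} w_sM_s$ by the $R$-term floor and the $w_s \le R_t$ bound, and state the one-step clipping inequality explicitly rather than leaving it as an acknowledged obstacle.
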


\begin{proof}
Using the definition of $w_s$ and the fact that $M_s$ is always chosen to be
smaller or equal to $\ubar \nu_{s-1}$, we get
\eq{
w_s &\equiv \left(1 - {M_s \over \bar \nu_{s-1}}\right)^{-1}
\sr{(a)}\leq \left(1 - {\ubar \nu_{s-1} \over \bar \nu_{s-1}}\right)^{-1} 
= {1 \over \epsilon_{s-1} \ubar \nu_{s-1}}\,.
}
The first claim follows since the inequality (a) can be replaced by equality if $M_s = \ubar \nu_{s-1}$.
The second follows from the definition of $R_t$ and the facts that $(\epsilon_s)_s$ is non-increasing and $(\ubar \nu_s)_s$ is non-decreasing.
For the third claim we recall that $R_t = \max_{s \leq t} w_s$ and $M_s \leq \nu$. Therefore,
\eq{
\epsilon_t &\sr{(a)}\geq \min\set{\epsilon_{t-1}, {R_t \over \sum_{s=1}^t w_s M_s}} \\ 
&\sr{(b)}\geq \min\set{\epsilon_{t-1}, {1 \over t\min\set{1,\nu}}},
}
where (a) follows from the definition of $\epsilon_t$ and naive bounding of the function $f$, (b) 
follows since $R_t \geq w_s$ for all $s \leq t$ and because $M_s \leq \min\set{1, \nu}$ for all $s$.
Trivial induction and the fact that $\epsilon_0 = \ubar \nu_0^{-1} \geq \nu^{-1}$ completes the proof of the third claim.
For the final claim we use the facts that $\ubar \nu_t^{-1} \leq
\nu^{-1} + \epsilon_t$. Therefore, 
$1 - {\ubar \nu_t \over \nu_t} = \ubar \nu_t \left({1 \over \ubar \nu_t} - {1 \over \nu}\right)
\leq \ubar \nu_t \epsilon_t$.
\end{proof}

\begin{lemma}\label{lem:eps}
$\displaystyle \epsilon_t \leq
{6R_t \log{2 \over \tilde \delta} \over \sum_{s=1}^t w_s M_s} + \sqrt{{8 \log{2 \over \tilde \delta} \over \ubar \nu_0 \sum_{s=1}^t w_s M_s }}$.
\end{lemma}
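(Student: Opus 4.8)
The plan is to reduce everything to controlling the quantity $f(R_t,\hat V_t^2,\delta)$ returned by the algorithm, and then simplify $f$ into the two advertised terms. First I would note that the definitions of the confidence bounds give $\ubar \nu_t^{-1} \le \hat\nu_t^{-1} + \tilde\epsilon_t$ and $\bar\nu_t^{-1} \ge \hat\nu_t^{-1} - \tilde\epsilon_t$, so that $\epsilon_t = \ubar\nu_t^{-1} - \bar\nu_t^{-1} \le 2\tilde\epsilon_t = 2f(R_t,\hat V_t^2,\delta)/\sum_{s\le t} w_s M_s$. Writing $W_t = \sum_{s\le t} w_s M_s$, the claim now follows from a suitable upper bound on $f/W_t$.

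The second step is to replace the data-dependent level $\delta_0 = \delta/(3(R_t+1)^2(\hat V_t^2+1)^2)$ buried inside $f$ by the fixed level $\tilde\delta$. For this I would invoke \cref{lem:w}. Its second and third parts give $R_t \le 1/(\ubar\nu_0\epsilon_{t-1}) \le n\eta$, and its first and third parts give $w_s M_s \le 1/\epsilon_{s-1} \le (s-1)\min\set{1,\nu}$, whence $W_t \le \min\set{1,\nu}\,t^2$ and $\hat V_t^2 = W_t/\ubar\nu_{t-1} \le W_t/\ubar\nu_0 \le \eta n^2$. Since $R_t \ge 1$ forces $n\eta \ge 1$ (and hence $\eta n^2 \ge 1$), these yield $(R_t+1)(\hat V_t^2+1) \le (2n\eta)(2\eta n^2) = 4\eta^2 n^3$, so $(R_t+1)^2(\hat V_t^2+1)^2 \le 16\eta^4 n^6$ and therefore $\delta_0 \ge \delta/(48\eta^4 n^6) = \tilde\delta$. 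Consequently $\log(2/\delta_0) \le \log(2/\tilde\delta)$, and I may replace every occurrence of $\log(2/\delta_0)$ inside $f$ by $\log(2/\tilde\delta)$.

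With the logarithm fixed, applying $\sqrt{a+b}\le\sqrt a+\sqrt b$ to the root in $f$ gives $f \le \tfrac{2}{3}(R_t+1)\log(2/\tilde\delta) + \sqrt{2(\hat V_t^2+1)\log(2/\tilde\delta)}$. Dividing by $W_t$ and doubling, the first summand contributes $\tfrac{4(R_t+1)}{3W_t}\log(2/\tilde\delta)$. For the variance summand I would substitute $\hat V_t^2 \le W_t/\ubar\nu_0$ and split the root once more, $\sqrt{2(\hat V_t^2+1)\log(2/\tilde\delta)} \le \sqrt{2W_t\log(2/\tilde\delta)/\ubar\nu_0} + \sqrt{2\log(2/\tilde\delta)}$; after dividing by $W_t$ and doubling, the first piece is exactly $\sqrt{8\log(2/\tilde\delta)/(\ubar\nu_0 W_t)}$, matching the second term of the claim.

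What remains is to fold the two leftover pieces — the $R_t$-linear term and the stray $2\sqrt{2\log(2/\tilde\delta)}/W_t$ — into the single coefficient $6$ of the first term. Here I would use $R_t \ge 1$ (\cref{lem:w}) and the fact that $\tilde\delta$ is tiny so that $\log(2/\tilde\delta)\ge 1$; then $\tfrac{4(R_t+1)}{3}\le\tfrac{8}{3}R_t$ and $2\sqrt{2\log(2/\tilde\delta)}\le 2\sqrt2\,R_t\log(2/\tilde\delta)$, and $\tfrac{8}{3}+2\sqrt2 \le 6$, which closes the bound. I expect this last bookkeeping to be the only delicate point: the additive ``$+1$''s inside the arguments of $f$ force the extra splitting of the square root and leave residual terms, and the target constant $6$ is only barely large enough to absorb them, so the slack in $R_t\ge 1$ and $\log(2/\tilde\delta)\ge 1$ must be spent carefully. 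By contrast, the estimates of $R_t$ and $\hat V_t^2$ in the second step are routine given \cref{lem:w}.
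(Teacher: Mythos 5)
Your proposal is correct and follows essentially the same route as the paper's proof: bound $\epsilon_t \leq 2f(R_t,\hat V_t^2,\delta)/\sum_{s\le t} w_s M_s$, split the square root in $f$, substitute $\hat V_t^2 \leq \sum_{s\le t} w_s M_s/\ubar\nu_0$, absorb the residual pieces into the coefficient $6$ via $R_t \geq 1$ and $\log(2/\tilde\delta)\geq 1$, and justify $\delta_0 \geq \tilde\delta$ through \cref{lem:w}. The only (immaterial) differences are that you swap $\delta_0$ for $\tilde\delta$ before rather than after simplifying $f$, and you bound $R_t \leq n\eta$ and $\hat V_t^2 \leq \eta n^2$ by direct summation where the paper routes through $R_t \leq 1/(\epsilon_t\ubar\nu_0)$, $\hat V_t^2 \leq tR_t^2$ and \cref{lem:w}\eqref{lem:w3}.
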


\begin{proof}
Let $\delta_t = \delta / (3(R_t+1)^2(\hat V_t^2 + 1)^2) < 1$. 
By the definition of $\epsilon_t$,
\eq{
\epsilon_t& 
\leq {2f(R_t, \hat V_t^2,\delta) \over \sum_{s=1}^t w_s M_s} \\
&\sr{(a)}\leq {{4 (R_t+1) \over 3} \log{2 \over \delta_t} + 2\sqrt{2(\hat V_t^2 + 1) \log{2 \over \delta_t}} \over \sum_{s=1}^t w_s M_s} \\
&\sr{(b)}\leq {6R_t \log{2 \over \delta_{t}} + \sqrt{{8 \over \ubar \nu_{0}} \sum_{s=1}^t w_s M_s \log{2 \over \delta_{t}}}  \over \sum_{s=1}^t w_s M_s} \\
&= {{6R_t} \log{2 \over \delta_{t}} \over \sum_{s=1}^t w_s M_s} + \sqrt{{8 \log{2 \over \delta_t} \over \ubar \nu_{0}\sum_{s=1}^t w_s M_s }} ,
}
where in (a) we used the definition of $f$, 
in (b) we substituted the definition of $\hat V^2_t$ and 
used the facts that $R_t \geq 1$ and $\ubar \nu_0 \leq \ubar \nu_{t-1}$ and we also used a naive bound.
The proof is completed by proving ${2 / \delta_t} \leq {2 / \tilde \delta}$. Indeed, by \cref{lem:w}, $1 \leq R_t \leq {1 \over \epsilon_{t-1} \ubar \nu_0} \leq {1 \over \epsilon_t \ubar \nu_0}$. We also have $\hat V_t^2 \leq t R_t^2$. Thus,
\eq{
{2 \over \delta_t}
&= {6(R_t+1)^2(\hat V_t^2+1)^2 \over \delta}
\leq {6 \over \delta}\left({16t^2 \over \left(\epsilon_t \ubar \nu_0\right)^4}\right) 
\sr{(a)}\leq {2 \over \tilde\delta}\,,
}
where in (a) we used \cref{lem:w}\eqref{lem:w3}.
\end{proof}

\begin{proof}[Proof of \cref{thm:fast-estimator}]
By \cref{lem:eps},
\eqn{
\label{eq:eps}
\epsilon_t \leq
{6R_t \log{2 \over \tilde \delta} \over \sum_{s=1}^t w_s M_s} + \sqrt{{8 \over \ubar \nu_0 \sum_{s=1}^t w_s M_s } \log{2 \over \tilde\delta}}\,.
}
We proceed by induction. Assume that $\epsilon_{s-1} \leq {c_1 \over \ubar \nu_0 (T(s-1)+1)}$, which is trivial for $s = 1$.
By \cref{lem:w}\eqref{lem:w1},
\eqn{
\label{eq:epsstar} \sum_{s=1}^t w_s M_s 
&\geq \sum_{s=1}^{T(t)} {s \ubar \nu_0 \over c_1} 
= {\ubar \nu_0 T(t) (T(t) + 1) \over 2c_1}.
}
Therefore,
\eqn{
\label{eq:fast-root-part}
\sqrt{{8 \over \ubar \nu_{0}\sum_{s=1}^t w_s M_s} \log{2 \over \tilde\delta}} 
\sr{(a)}\leq {1 \over \ubar \nu_0 T(t)} \sqrt{4c_1 \log {2 \over \tilde\delta}}\,.
}
Now we work on the first term in \eqref{eq:eps}. If $\epsilon_{t-1} \leq {c_1 \over \ubar \nu_0 (T(t)+1)}$, then we are done, since $\epsilon_s$ is non-increasing.
Otherwise, we use \cref{lem:w}\eqref{lem:w2} to obtain,
\eqn{
\nonumber {6 R_t \over \sum_{s=1}^t w_s M_s} \log{2 \over \tilde\delta} 
&\leq {6 \over  \ubar\nu_0 \epsilon_{t-1} \sum_{s=1}^t w_s M_s} \log{2 \over \tilde\delta}\\
\label{eq:fast-remainder-part}&\sr{(a)}\leq {3 \over \ubar \nu_0 T(t)} \log{2 \over \tilde \delta},
}
where in (a) we used \eqref{eq:epsstar} and the lower bound on $\epsilon_{t-1}$.
Substituting \eqref{eq:fast-root-part} and \eqref{eq:fast-remainder-part} into \eqref{eq:eps} we have
\eq{
\epsilon_t \leq {1 \over \ubar \nu_0 T(t)} \sqrt{4c_1 \log {2 \over \tilde\delta}} + {3 \over \ubar \nu_0 T(t)} \log{2 \over \tilde\delta}.
}
Choosing $c_1 = \constcone$ leads to
\eq{
\epsilon_t &\leq {1 \over \ubar \nu_0 T(t)} \sqrt{4\cdot 27  \log^2 {2 \over \tilde\delta}} + {3 \over \ubar\nu_0 T(t)} \log{2 \over \tilde\delta} \\
&\leq {27 \over \ubar \nu_0(T(t)+1)} \log{2 \over \tilde\delta} = {c_1 \over \ubar \nu_0(T(t)+1)},
}
which completes the induction and proof.
\end{proof}

\begin{proof}[Proof of \cref{thm:slow-estimator}]
Firstly, by \cref{lem:eps},
\eq{
\epsilon_t 
&\leq {6R_t \over \sum_{s=1}^t w_s M_s} \log{2 \over \tilde\delta} + \sqrt{{8 \over \ubar \nu_{0} \sum_{s=1}^t w_s M_s} \log{2 \over \tilde\delta}}.
}
The second term is easily bounded by using the fact that $w_s \geq 1$ and the definition of $U_\alpha(t)$,
\eq{
\sqrt{{8 \over \ubar \nu_{0} \sum_{s=1}^t w_s M_s} \log{2 \over \tilde\delta}}
&\leq \sqrt{{8 \over \ubar \nu_0 U_\alpha(t)\alpha} \log{2 \over \tilde\delta}}.
}
For the first term we assume $\epsilon_{t-1} \geq {\sqrt{c_2 \over \ubar \nu_0 U_\alpha(t)\alpha}}$, since otherwise we can apply monotonicity of $\epsilon_t$.
Therefore
\eq{
&{6R_t \over \sum_{s=1}^t w_s M_s} \log{2 \over \tilde\delta}
\leq {6 \over  \ubar \nu_0 \epsilon_{t-1} \sum_{s=1}^t w_s M_s} \log{2 \over \tilde\delta} \\
\leq~& \sqrt{U_{\alpha}(t) \alpha \ubar \nu_0 \over c_2} \cdot {6 \log {2 \over \tilde\delta} \over \ubar \nu_0 U_\alpha(t) \alpha} 
\leq 6\sqrt{1  \over c_2\alpha \ubar \nu_0U_\alpha(t)} \log{2 \over \tilde\delta}.
}
Now choose $c_2 = \constctwo$ to complete the result.
\end{proof}

\section{PROOF OF THEOREM \ref{thm:main}}\label{sec:main-proof}

We are now ready to use the results of \cref{sec:estimation} to bound the regret of \cref{alg:OAA}. 
The first step is to
decompose the regret into two cases depending on whether or not the confidence intervals contain the truth. The probability
that they do not is low, so this contributes negligibly to the regret. When the confidence intervals are valid we break
the problem into two components. The first is the selection of the processes to fully allocation, which leads to the $O(\log^2 n)$ 
part of the bound. The second component involves analysing the selection of the overflow process, where the approach is reminiscent 
of the analysis for the UCB algorithm for stochastic bandits \citep{ACF02}. 

Let $F_{k,t}$ denote the event when none of the confidence intervals underlying job $k$ fail up to time $t$:
\eq{
F_{k,t} = \left\{\forall s \leq t : \nu \in [\ubar \nu_{k,s}, \bar \nu_{k,s}]\right\}\,.
}
The algorithm uses $\delta = (nK)^{-2}$, which is sufficient by a union bound and \cref{thm:valid-confidence} to ensure that,
\eqn{
\label{eq:failure-bound}
 \P{ G^c} \leq {1 \over nK}\,, \quad \text{ where }
G &= \bigcap_{k=1}^K F_{k,n}\,.
}
The regret can be decomposed into two cases depending on whether $G$ holds:
\eqn{
\label{eq:regret-d1} 
&R_n 
= \E \sum_{t=1}^n r_t 
\sr{(a)}= \E \,\ind{ G^c} \sum_{t=1}^n r_t + \E \,\ind{G} \sum_{t=1}^n r_t \\
\nonumber &\sr{(b)}\leq \E \,\ind{ G^c} nK + \E\, \ind{G} \sum_{t=1}^n r_t  
\sr{(c)}\leq 1 + \E\, \ind{G} \sum_{t=1}^n r_t,
}
where
(a) follows from the definition of expectation,
(b) is true by bounding $r_t \leq K$ for all $t$, and
(c) follows from \eqref{eq:failure-bound}.
For the remainder we assume $G$ holds and use \cref{thm:fast-estimator,thm:slow-estimator} combined
with the definition of the algorithm to control the second term in \eqref{eq:regret-d1}.
The first step is to decompose the regret in round $t$:
\eq{
r_t = \ell^* + {S^* \over \nu_{\ell+1}} - \sum_{k=1}^K \beta\left({M_{k,t} \over \nu_k}\right).
}
By the assumption that $G$ holds we know for all $t \leq n$ and $k$ that
$\bar \nu_{k,t}^{-1} \leq \nu_k^{-1} \leq \ubar \nu_{k,t}^{-1}$.
Therefore $M_{k,t} \leq \ubar \nu_{k,t-1} \leq \nu_k$, which means that $\beta(M_{k,t} / \nu_k) = M_{k,t} / \nu_k$.
Define $\pi_t(i) \in \set{1, \ldots, K}$ such that $\ubar\nu_{\pi_t(i),t-1} \leq \ubar\nu_{\pi_t(i+1),t-1}$.
Also let
\eq{
A_t &= \set{k : M_{k,t} = \ubar \nu_{k,t-1}}, \\ 
A_t^{\leq j} &= A_t \cap \set{\pi_i(t) : 1 \leq i \leq j},\\
T_k(t) &= \sum_{s=1}^t \ind{k \in A_t} \quad \text{ and}  \quad
B_t = \pi_t(\ell+1). 
}
Informally, $A_t$ is the set of jobs that are fully allocated at time-step $t$, 
$A_t^{\leq j}$ is a subset of $A_t$ containing the $j$ jobs believed to be easiest,
$T_k(t)$ is the number of times job $k$ has been fully allocated at time-step $t$, and
$B_t$ is the $(\ell+1)$th easiest job at time-step $t$ (this is only defined if $\ell < K$ and will only be used in that case).

\begin{lemma}\label{lem:s}
For all $t$, $|A_t| \geq \ell$ and
if $|A_t| = \ell$, then $M_{B_t,t} \geq S^*$.
\end{lemma}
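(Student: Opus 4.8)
The plan is to reduce both assertions to the single budget inequality
\[
\sum_{i=1}^\ell \ubar\nu_{\pi_t(i),t-1} \;\le\; \sum_{i=1}^\ell \nu_i \;\le\; 1,
\]
which controls the total optimistic demand of the $\ell$ jobs the algorithm currently believes to be easiest. I would establish this inequality first and then read off both parts of the lemma almost immediately.

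To prove the inequality I would combine two facts. Since the analysis is conducted on the event $G$, the lower confidence bounds are valid, so $\ubar\nu_{k,t-1}\le\nu_k$ for every $k$. Moreover, by the definition of the ordering $\pi_t$, the set $\{\pi_t(1),\dots,\pi_t(\ell)\}$ consists of $\ell$ jobs carrying the $\ell$ smallest values of $\ubar\nu_{\cdot,t-1}$; hence its total lower bound is no larger than that of any other size-$\ell$ subset, in particular $\{1,\dots,\ell\}$. Chaining these,
\[
\sum_{i=1}^\ell \ubar\nu_{\pi_t(i),t-1} \le \sum_{i=1}^\ell \ubar\nu_{i,t-1} \le \sum_{i=1}^\ell \nu_i = \sum_{i=1}^\ell M^*_i \le 1,
\]
where the equality uses $M^*_i=\nu_i$ for $i\le\ell$ and the last step is the budget constraint met by the optimal allocation.

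For $|A_t|\ge\ell$, I would note that \cref{alg:OAA} processes jobs in increasing order of $\ubar\nu_{\cdot,t-1}$, i.e.\ in the order $\pi_t(1),\pi_t(2),\dots$, each receiving $\min\{\ubar\nu_{\cdot,t-1},\,1-\sum_{j}M_{j,t}\}$. By the displayed inequality the cumulative demand of the first $\ell$ jobs never exceeds the budget, so when each of $\pi_t(1),\dots,\pi_t(\ell)$ is reached the remaining budget still covers its full demand; a one-line induction then shows all $\ell$ are fully allocated, giving $\{\pi_t(1),\dots,\pi_t(\ell)\}\subseteq A_t$ and hence $|A_t|\ge\ell$. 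For the second claim, suppose $|A_t|=\ell$. Then the fully allocated jobs are exactly $\pi_t(1),\dots,\pi_t(\ell)$, so $B_t=\pi_t(\ell+1)\notin A_t$ is not fully allocated and its allocation equals the leftover budget, $M_{B_t,t}=1-\sum_{i=1}^\ell\ubar\nu_{\pi_t(i),t-1}$. Applying the budget inequality once more and recalling $S^*=1-\sum_{i=1}^\ell\nu_i$ yields $M_{B_t,t}\ge 1-\sum_{i=1}^\ell\nu_i=S^*$.

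The only delicate point I anticipate is the combinatorial step that the $\ell$ smallest lower bounds sum to at most $\sum_{i=1}^\ell\ubar\nu_{i,t-1}$, together with keeping the greedy filling order exactly aligned with $\pi_t$ during the induction; once these are pinned down the rest is bookkeeping. Note also that when $S^*=0$ the second claim is vacuous, so no separate edge-case treatment (e.g.\ an overflow job receiving zero allocation) is required.
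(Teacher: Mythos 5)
Your proof is correct and takes essentially the same route as the paper's: the same chained budget inequality $\sum_{i=1}^\ell \ubar\nu_{\pi_t(i),t-1} \leq \sum_{k=1}^\ell \ubar\nu_{k,t-1} \leq \sum_{k=1}^\ell \nu_k \leq 1$ (valid on $G$) yields $|A_t| \geq \ell$, and when $|A_t| = \ell$ the leftover-budget identity $M_{B_t,t} = 1 - \sum_{k \in A_t} \ubar\nu_{k,t-1} \geq 1 - \sum_{k=1}^\ell \nu_k = S^*$ gives the second claim, exactly as in the paper. The only difference is that you spell out the greedy-allocation induction that the paper compresses into its one-line characterization $|A_t| = \max\set{j : \sum_{i=1}^{j} \ubar\nu_{\pi_t(i),t-1} \leq 1}$, which is a matter of detail rather than of approach.
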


\begin{proof}
$|A_t| = \max\set{j : \sum_{i=1}^{j} \ubar \nu_{\pi_t(i),t-1} \leq 1}$.
But $\ubar \nu_{k,t-1} \leq \nu_k$ for all $k$ and $t$, so
$\sum_{i=1}^\ell \ubar \nu_{\pi_t(i),t - 1} \leq \sum_{k=1}^\ell \ubar \nu_{k,t-1} \leq \sum_{k=1}^\ell \nu_k
\leq 1$.
Therefore $|A_t| \geq \ell$. If $|A_t| = \ell$, then $B_t \notin A_t$ is the overflow job and so 
$M_{B_t,t} = 1 - \sum_{k \in A_t} \ubar \nu_{k,t-1} \geq 1 - \sum_{k \in A^*} \ubar \nu_{k,t-1}
\geq 1 - \sum_{k \in A^*} \nu_k \equiv S^*$
\end{proof}
We now decompose the regret, while still assuming that $G$ holds:
\eqn{
\sum_{t=1}^n r_t  
\nonumber &= \sum_{t=1}^n \left(\ell + {S^* \over \nu_{\ell+1}} - \sum_{k=1}^K {M_{k,t} \over \nu_k} \right)\\
\label{eq:regret-d2}
&\leq \sum_{t=1}^n \sum_{k \in A^{\leq\ell}_t} \left(1 - {M_{k,t} \over \nu_k}\right) \\ 
&\qquad+\ind{\ell < K}\sum_{t=1}^n \Bigg({S^* \over \nu_{\ell+1}} - {M_{B_t,t} \over \nu_{B_t}}\Bigg).
}
Let us bound the first sum:
\eqn{
\nonumber \sum_{t=1}^n &\sum_{k \in A^{\leq\ell}_t} \left(1 - {M_{k,t} \over \nu_k}\right) \\
\nonumber &= \sum_{t=1}^n \sum_{k=1}^K \ind{k \in A^{\leq\ell}_t} \left(1 - {\ubar \nu_{k,t-1} \over \nu_k}\right) \\
\nonumber &\sr{(a)}\leq \sum_{t=1}^n \sum_{k=1}^K \ind{k \in A^{\leq\ell}_t}  \ubar\nu_{k,t-1}\epsilon_{k,t-1} \\
\label{eq:first-sum1} &\sr{(b)}\leq \sum_{t=1}^n \sum_{k=1}^K \ind{k \in A^{\leq\ell}_t} {c_{k,1} \ubar\nu_{k,t-1} \over \ubar \nu_{k,0} T_k(t)} \,,
}
where (a) follows by \cref{lem:w} and (b) by \cref{thm:fast-estimator}.
\begin{lemma}\label{lem:upper-bandit}
If $k > j$, then 
\eq{
\sum_{t=1}^n \ind{k \in A_t^{\leq j}} \leq {c_{k,1} \over \ubar \nu_{k,0} \Delta_{j,k}} \rdefined u_{j,k}.
}
\end{lemma}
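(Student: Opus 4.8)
The plan is to run a UCB-style counting argument, but exploiting the fast $O(1/T)$ shrinkage of \cref{thm:fast-estimator} in place of the usual $O(\sqrt{\log/T})$ rate; this is exactly what turns the bound into one scaling like $1/\Delta_{j,k}$ rather than $\log n/\Delta_{j,k}^2$. Throughout we work on the good event $G$, so that $\nu_{k'} \in [\ubar \nu_{k',s}, \bar \nu_{k',s}]$ for every job $k'$ and every $s$; equivalently $\ubar \nu_{k',s}^{-1} \ge \nu_{k'}^{-1} \ge \bar \nu_{k',s}^{-1}$.

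First I would show that whenever $k \in A_t^{\leq j}$ with $k > j$, the confidence width of job $k$ is forced to be large, namely $\epsilon_{k,t-1} \ge \Delta_{j,k}$. This is a pigeonhole over the believed ranking $\pi_t$. Membership $k \in A_t^{\leq j}$ means $k$ lies among the $j$ jobs with smallest lower estimates $\ubar \nu_{\cdot,t-1}$. Since $k > j$, job $k$ is not one of the $j$ genuinely easiest jobs $\set{1,\ldots,j}$, so these $j$ jobs cannot all fit into the size-$j$ believed-easiest set that already contains $k$; hence some $j' \le j$ is displaced, giving $\ubar \nu_{k,t-1} \le \ubar \nu_{j',t-1}$. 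Using $G$ and monotonicity of $\nu$, $\ubar \nu_{j',t-1} \le \nu_{j'} \le \nu_j$, so $\ubar \nu_{k,t-1}^{-1} \ge \nu_j^{-1}$. Combined with $\bar \nu_{k,t-1}^{-1} \le \nu_k^{-1}$ (again from $G$), this yields $\epsilon_{k,t-1} = \ubar \nu_{k,t-1}^{-1} - \bar \nu_{k,t-1}^{-1} \ge \nu_j^{-1} - \nu_k^{-1} = \Delta_{j,k}$.

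Next I would convert this width lower bound into a cap on the number of fully allocated steps. Since $k \in A_t^{\leq j} \subseteq A_t$, job $k$ is fully allocated at $t$, and \cref{thm:fast-estimator} applied to job $k$ at time $t-1$ gives $\epsilon_{k,t-1} \le c_{k,1}/(\ubar \nu_{k,0}(T_k(t-1)+1))$. Chaining with $\epsilon_{k,t-1} \ge \Delta_{j,k}$ gives $T_k(t-1)+1 \le c_{k,1}/(\ubar \nu_{k,0}\Delta_{j,k}) = u_{j,k}$; note $\Delta_{j,k} > 0$ since $k > j$, so this is well defined. A final counting step over $t$ then closes the argument: writing $N = \sum_{t=1}^n \ind{k \in A_t^{\leq j}}$ (the claim is trivial if $N = 0$), let $t^*$ be the last time $k \in A_t^{\leq j}$. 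There are exactly $N-1$ earlier such times, each of which has $k \in A_t$ and hence increments $T_k$, so $T_k(t^*-1) \ge N-1$. The cap at $t^*$ then gives $N \le T_k(t^*-1)+1 \le u_{j,k}$, as required.

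The main obstacle is really just the first step: correctly formalising the pigeonhole so that ``job $k$ is ranked among the $j$ easiest'' translates, via the good event $G$, into the clean width bound $\epsilon_{k,t-1} \ge \Delta_{j,k}$. Once that combinatorial fact is in hand, the fast-rate substitution and the last-occurrence counting are routine.
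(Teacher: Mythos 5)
Your proposal is correct and follows essentially the same route as the paper's proof: the ranking/pigeonhole step yielding $\ubar \nu_{k,t-1} \leq \nu_j$ on $G$ (equivalently, your $\epsilon_{k,t-1} \geq \Delta_{j,k}$), the fast $O(1/(T_k(t-1)+1))$ width bound from \cref{thm:fast-estimator}, and the counting $\sum_{s=1}^{t}\ind{k \in A_s^{\leq j}} \leq T_k(t-1)+1$. The only difference is presentational: the paper runs the chain as a contradiction against $\ubar\nu_{k,t-1} \leq \nu_j$ while you argue contrapositively and spell out the displacement argument that the paper leaves implicit.
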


\begin{proof}
Assume $k \in A_t^{\leq j}$. Therefore $\ubar \nu_{k,t-1} \leq \nu_j$. 
But if $u_{j,k} < \sum_{s=1}^{t} \ind{k \in A_s^{\leq j}} \leq T_k(t-1) + 1$, then
\eq{
{1 \over \ubar \nu_{k,t-1}} 
&\leq {1 \over \nu_k} + \epsilon_{k,t-1} 
= {1 \over \nu_j} + \epsilon_{k,t-1} - \Delta_{j,k} \\ 
&\sr{(a)}\leq {1 \over \nu_j} + {c_{k,1} \over \ubar \nu_{k,0} (T_k(t-1) + 1)} - \Delta_{j,k} 
< {1 \over \nu_j}, 
}
where (a) follows from \cref{thm:fast-estimator}.
Therefore $k \in A_t^{\leq j}$ implies that $\sum_{s=1}^t \ind{k \in A_s^{\leq j}} \leq u_{j,k}$
and so $\sum_{t=1}^n \ind{k \in A_t^{\leq j}} \leq u_{j,k}$ as required.
\end{proof}

Continuing \eqref{eq:first-sum1} by applying \cref{lem:upper-bandit} with $j = \ell$:
\eqn{
\nonumber &\sum_{t=1}^n \sum_{k=1}^K \ind{k \in A^{\leq\ell}_t} {c_{k,1} \ubar\nu_{k,t-1} \over \ubar \nu_{k,0} T_k(t)}  \\
\nonumber =~& \sum_{t=1}^n \sum_{k \in A^*} \ind{k \in A^{\leq\ell}_t} {c_{k,1} \ubar\nu_{k,t-1} \over \ubar \nu_{k,0}T_k(t)} \\ 
\label{eq:upper-bandit} 
&\qquad\quad+\sum_{t=1}^n \sum_{k \notin A^*} \ind{k \in A^{\leq\ell}_t} {c_{k,1} \ubar\nu_{k,t-1} \over \ubar \nu_{k,0} T_k(t)} \\
\nonumber \sr{(a)}\leq~& \sum_{k \in A^*} \sum_{t=1}^n {c_{k,1} \eta_k \over t} +
\sum_{k \notin A^*} \sum_{t=1}^{u_{\ell,k}} {c_{k,1} \eta_k \over t} \\
\nonumber \leq~& \sum_{k=1}^\ell c_{k,1} \eta_k(1 + \log n) + \sum_{k = \ell+1}^K c_{k,1} \eta_k(1 + \log u_{\ell,k}),
}
where (a) follows by \cref{lem:upper-bandit} and the fact that $k \in A_t^{\leq\ell}$ implies that ${\ubar \nu_{k,t-1} \over \ubar \nu_{k,0}} \leq \eta_k$.
Now if $\ell = K$, then the second term in \eqref{eq:regret-d2} is zero and the proof is completed by substituting the
above result into \eqref{eq:regret-d2} and then into \eqref{eq:regret-d1}. So now we assume $\ell > K$ and 
bound the second term in \eqref{eq:regret-d2} as follows:
\eqn{
\nonumber &\sum_{t=1}^n \left({S^* \over \nu_{\ell+1}} - {M_{B_t,t} \over \nu_{B_t}}\right)
\leq \sum_{t=1}^n \ind{B_t \in A_t} \left(1 - {\ubar \nu_{B_t,t-1} \over \nu_{B_t}}\right) \\
&\quad + \sum_{t=1}^n \ind{B_t \notin A_t} \left({S^* \over \nu_{\ell+1}} - {S^* \over \nu_{B_t}}\right),
\label{eq:lower-bandit-d}
}
where we used \cref{lem:s} and $S^* \leq 1$ and that if $B_t \in A_t$, then $M_{B_t,t} = \ubar\nu_{B_t,t-1}$.
Bounding each term separately: 
\eqn{
\nonumber &\sum_{t=1}^n \ind{B_t \in A_t} \left(1 - {\ubar \nu_{B_t,t-1} \over \nu_{B_t}}\right) \\
\nonumber \sr{(a)}\leq~& \sum_{k=1}^K \sum_{t=1}^n \ind{k \in A_t^{\leq\ell+1}} \left(1 - {\ubar \nu_{k,t-1} \over \nu_k}\right) \\
\label{eq:lower-bandit-1} 
\sr{(b)}\leq~& \sum_{k=1}^K \sum_{t=1}^n \ind{k \in A_t^{\leq\ell+1}} \ubar\nu_{k,t-1} \epsilon_{k,t-1} \\
\nonumber \sr{(c)}\leq~& \sum_{k=1}^K \sum_{t=1}^n \ind{k \in A_t^{\leq\ell+1}} {c_{k,1} \ubar\nu_{k,t-1} \over \ubar \nu_{k,0} T_{k}(t)}  \\
\nonumber \sr{(d)}\leq~&\sum_{k=1}^{\ell+1} c_{k,1}\eta_k(1 + \log n)  
+ \sum_{k=\ell+2}^K c_{k,1} \eta_k(1 + \log u_{\ell+1,k}), 
}
where (a) follows since $B_t \in A_t$ implies that $B_t \in A_t^{\leq \ell+1}$,
(b) follows from \cref{lem:w}\eqref{lem:w4}, (c) by \cref{thm:fast-estimator}, and (d) follows 
from \cref{lem:upper-bandit} and the same analysis as \eqref{eq:upper-bandit}. 
For the second term we need the following lemma, which uses \cref{thm:slow-estimator} and a reasoning analogues 
to that of \citet{ACF02} to bound the regret of the UCB algorithm for stochastic bandits:
\begin{lemma}\label{lem:lower-bandit}
Let $U_k(t) = \sum_{s=1}^t \ind{M_{k,s} \geq S^*}$ and $k > \ell+1$.
If
$U_k(t) \geq {c_{k,2} \over S^* \ubar \nu_{k,0} \Delta_{\ell+1,k}^2} \rdefined v_k$, then $k \neq B_t$.
\end{lemma}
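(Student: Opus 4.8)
The plan is to run the standard UCB-style ``optimism'' argument, but transported to the reciprocal scale on which the confidence width $\epsilon_{k,t-1} = \ubar\nu_{k,t-1}^{-1} - \bar\nu_{k,t-1}^{-1}$ lives, and to feed it the slow (square-root) rate of \cref{thm:slow-estimator} rather than the fast rate. Throughout I work on the good event $G$, so that every confidence interval is valid, i.e.\ $\bar\nu_{j,s}^{-1} \le \nu_j^{-1} \le \ubar\nu_{j,s}^{-1}$ for all $j$ and $s$. I would prove the contrapositive: assuming $k = B_t$ with $k > \ell+1$, I derive an upper bound on $U_k(t)$ that is exactly $v_k$.

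The first and main step is to show that selecting a hard job $k$ as the overflow forces its confidence interval to still be wide, namely $\epsilon_{k,t-1} \ge \Delta_{\ell+1,k}$. Recall that $B_t = \pi_t(\ell+1)$ is the job with the $(\ell+1)$th smallest lower estimate $\ubar\nu_{\cdot,t-1}$, so exactly $\ell$ jobs are ranked strictly ahead of $k$. The jobs $1,\dots,\ell+1$ are $\ell+1$ distinct indices, all different from $k$ since $k>\ell+1$; hence they cannot all be ranked ahead of $k$, and there is some $m \le \ell+1$ with $\ubar\nu_{k,t-1} \le \ubar\nu_{m,t-1}$. Using $\ubar\nu_{m,t-1} \le \nu_m \le \nu_{\ell+1}$ (validity on $G$ together with the ordering of the $\nu_j$) gives $\ubar\nu_{k,t-1}^{-1} \ge \nu_{\ell+1}^{-1}$, and combining with $\bar\nu_{k,t-1}^{-1} \le \nu_k^{-1}$ yields
\[
\epsilon_{k,t-1} = \ubar\nu_{k,t-1}^{-1} - \bar\nu_{k,t-1}^{-1} \ge \nu_{\ell+1}^{-1} - \nu_k^{-1} = \Delta_{\ell+1,k}.
\]

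The second step applies \cref{thm:slow-estimator} to job $k$ with $\alpha = S^*$. Since $U_k(t) = \sum_{s\le t}\ind{M_{k,s}\ge S^*}$ is precisely $U_\alpha(t)$ for $\alpha=S^*$, the theorem gives $\epsilon_{k,t-1} \le \sqrt{c_{k,2}/(S^*\,\ubar\nu_{k,0}\,U_k(t))}$, modulo the routine $t$-versus-$(t-1)$ bookkeeping, which I would absorb using $U_k(t)\ge U_k(t-1)$ and monotonicity of $\epsilon$. Chaining the two inequalities, $\Delta_{\ell+1,k} \le \sqrt{c_{k,2}/(S^*\,\ubar\nu_{k,0}\,U_k(t))}$, then squaring and rearranging gives $U_k(t) \le c_{k,2}/(S^*\,\ubar\nu_{k,0}\,\Delta_{\ell+1,k}^2) = v_k$. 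The contrapositive is exactly the claim: if $U_k(t)\ge v_k$ then $k\ne B_t$.

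I expect the counting argument in the first step to be the crux: the delicate point is turning the purely combinatorial statement ``$k$ occupies the overflow slot'' into the analytic lower bound $\ubar\nu_{k,t-1}^{-1}\ge \nu_{\ell+1}^{-1}$, which is what exposes the gap $\Delta_{\ell+1,k}$. Everything downstream is a mechanical inversion of the square-root rate, the only fussy part being the indexing of $U_k$ and $\epsilon_k$, which I would handle by monotonicity as noted above.
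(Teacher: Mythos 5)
Your proposal is correct and takes essentially the same route as the paper, which proves the identical two facts in the direct rather than contrapositive order: first that $\ubar \nu_{k,t-1} > \nu_{\ell+1}$ forces $k \neq B_t$, then that $U_k(t) > v_k$ gives $\ubar \nu_{k,t-1}^{-1} \leq \nu_k^{-1} + \epsilon_{k,t-1} < \nu_{\ell+1}^{-1}$ via \cref{thm:slow-estimator} with $\alpha = S^*$. Your pigeonhole step merely spells out the ranking fact the paper leaves implicit, and the $t$-versus-$(t-1)$ indexing slop in invoking \cref{thm:slow-estimator} is the same one the paper itself commits.
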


\begin{proof}
If $\ubar \nu_{k,t-1} > \nu_{\ell+1}$, then $k \neq B_t$.
Furthermore, if $U_k(t) > v_k$, then 
\eq{
{1 \over \ubar \nu_{k,t-1}}
&\leq {1 \over \nu_k} + \epsilon_{k,t-1} 
= {1 \over \nu_{\ell+1}} - \Delta_{\ell+1,k} + \epsilon_{k,t-1} \\
&\sr{(a)}\leq {1 \over \nu_{\ell+1}} - \Delta_{\ell+1,k} + \sqrt{c_{k,2} \over \ubar \nu_{k,0} S^* U_k(t)} 
< {1 \over \nu_{\ell+1}},
}
where (a) follows from \cref{thm:slow-estimator}.
\end{proof}

Therefore
\eqn{
\nonumber &\sum_{t=1}^n \ind{B_t \notin A_t} \left({S^* \over \nu_{\ell+1}} - {S^* \over \nu_{B_t}}\right) \\
\nonumber \sr{(a)}\leq~& S^* \sum_{k=1}^{K} \sum_{t=1}^n \ind{k = B_t \notin A_t} \Delta_{\ell+1,k} \\
\nonumber \sr{(b)}\leq~& S^* \sum_{k=\ell+2}^K \sum_{t=1}^n \ind{k = B_t \notin A_t} \Delta_{\ell+1,k} \\
\nonumber \sr{(c)}\leq~& S^* \sum_{k=\ell+2}^K \sum_{t=1}^n \ind{k = B_t \wedge M_{k,t} \geq S^*} \Delta_{\ell+1,k} \\
\sr{(d)}\leq~& \sum_{k=\ell+2}^K S^* \Delta_{\ell+1,k} v_k 
\label{eq:lower-bandit-2} \sr{(e)}= \sum_{k=\ell+2}^K {c_{k,2} \over \ubar \nu_{k,0} \Delta_{\ell+1,k}},
}
where 
(a) follows from the definition of $\Delta_{\ell+1,k}$ and the fact that if $B_t \notin A_t$, then $|A_t| = \ell$,
(b) follows since $\Delta_{\ell+1,k}$ is negative for $k \leq \ell+1$,
(c) by \cref{lem:s}, 
(d) by \cref{lem:lower-bandit}, and 
(e) by the definition of $v_k$.
Substituting \eqref{eq:lower-bandit-1} and \eqref{eq:lower-bandit-2} into \eqref{eq:lower-bandit-d} we have
\eq{
&\sum_{t=1}^n \left({S^* \over \nu_{\ell+1}} - {M_{B_t,t} \over \nu_{B_t}}\right) 
\leq
\sum_{k=1}^{\ell+1}c_{k,1} \eta_k(1 + \log n)  \\ 
&+ \sum_{k=\ell+2}^K c_{k,1} \eta_k(1 + \log u_{\ell+1,k}) 
+ \sum_{k=\ell+2}^K {c_{k,2} \over \ubar \nu_{k,0} \Delta_{\ell+1,k}}.
}
We then substitute this along with \eqref{eq:upper-bandit} into \eqref{eq:regret-d2} and then \eqref{eq:regret-d1} to obtain
\eq{
&R_n \leq
1 
+ \sum_{k=1}^\ell c_{k,1} \eta_k(1 + \log n) \\ 
&+ \ind{\ell < K}\! \Bigg[
\sum_{k=\ell+2}^K {c_{k,2} \over \ubar \nu_{k,0} \Delta_{\ell+1,k}} 
+ \sum_{k=1}^{\ell+1} c_{k,1}\eta_k(1 + \log n) \\ 
& + \sum_{\mathclap{k=\ell+2}}^K c_{k,1} \eta_k(1 + \log u_{\ell+1,k}) 
+ \sum_{\mathclap{k = \ell+1}}^K c_{k,1} \eta_k(1 + \log u_{\ell,k}) 
\Bigg].
}

\section{INITIALISATION}\label{sec:init}
Previously we assumed a known lower bound $\ubar \nu_{k,0} \leq \nu_k$ for each $k$.
In this section we show that these bounds are easily obtained using a halving trick. 
In particular, the following
algorithm computes a lower bound $\ubar \nu_0 \leq \nu$ for a single job with unknown parameter
$\nu$.

\algblockdefx[ForUntil]{ForUntil}{EndForUntil}[1][]{{\bf for} #1}{{\bf until} $X_t = 0$} 
\begin{algorithm}
\caption{Initialisation of $\ubar \nu_0$}\label{alg:init}
\begin{algorithmic}[1]
\For{$t \in 1, \ldots, \infty$}
\State Allocate $M_{t} = 2^{-t}$ and observe $X_{t}$
\State {\bf if} $X_t = 0$ {\bf then return} $\ubar \nu_0 \leftarrow 2^{-t}$.
\EndFor
\end{algorithmic}
\end{algorithm}

A naive way to eliminate the need for the lower bounds $(\ubar \nu_{k,0})_k$ is simply to run
Algorithm \ref{alg:init} for each job sequentially. Then the following proposition 
\ifsup \else (proven in supplementary material) \fi shows
that $\eta \in O(1)$ is reasonable, which justifies the claim made in \eqref{eq:big-o-theorem} that the $\eta_k$ terms
appearing in \cref{thm:main} are $O(1)$.

\begin{proposition}
If $\eta = {\min\set{1, \nu} \over \ubar \nu_0}$, then $\E\eta \leq 4$.
\end{proposition}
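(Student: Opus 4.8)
My plan is to analyze the stopping time of \cref{alg:init} and evaluate the resulting expectation in closed form. Write $\tau$ for the first time-step on which \cref{alg:init} observes a failure, so that it returns $\ubar\nu_0 = 2^{-\tau}$. Since $M_t = 2^{-t}$, the observations $X_t$ are independent with $X_t \sim \ber{\min\set{1,2^{-t}/\nu}}$. A failure can occur only when $2^{-t} < \nu$, so $\ubar\nu_0 = 2^{-\tau} < \nu$ (which also confirms that the returned value is a valid lower bound), and $\tau<\infty$ almost surely because $1-\min\set{1,2^{-t}/\nu}\to 1$. Writing $m\defined\min\set{1,\nu}$ we have $\eta = m\,2^\tau$, so the whole problem reduces to controlling $\E[2^\tau]$.

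The first step is a summation-by-parts identity. Setting $P_k = \prod_{s=1}^k \min\set{1,2^{-s}/\nu} = \P{\tau>k}$ and using $\P{\tau=t}=\P{\tau\ge t}-\P{\tau\ge t+1}$, I obtain
\eq{
\E[2^\tau] = 2 + \sum_{k\ge1} 2^k P_k,
}
where the boundary term vanishes because $P_k$ decays like $2^{-k^2/2}$. I would then split into two regimes depending on whether any step forces a certain success. When $\nu\ge1$ no allocation ever forces a success, so $P_k = 2^{-k(k+1)/2}\nu^{-k}$ and $2^kP_k = 2^{-k(k-1)/2}\nu^{-k}\le 2^{-k(k-1)/2}$; since $m=1$ here, the elementary bound $2^{-k(k-1)/2}\le 2^{-(k-1)}$ gives $\E\eta \le 2+\sum_{k\ge1}2^{-(k-1)} = 4$.

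The case $\nu<1$ is where the work lies. Now there are $d\ge1$ initial steps (those with $2^{-s}\ge\nu$) on which the job completes with certainty; writing $\nu = 2^{-d}\theta$ with $\theta\in(1/2,1]$, the remaining success probabilities are $2^{-i}/\theta$ for $i\ge1$, and a short computation collapses the expression to
\eq{
\E\eta = h(\theta), \qquad h(\theta) = 2\theta + \sum_{j\ge1} 2^{-j(j-1)/2}\,\theta^{1-j}.
}
The main obstacle is bounding $h$: a naive term-by-term estimate fails, because the linear term $2\theta$ is largest at $\theta=1$ while the tail terms $\theta^{1-j}$ are largest at $\theta=1/2$, so bounding each at its own worst point only yields a constant near $5$.

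The clean resolution is to observe that $h$ is \emph{convex} on $(0,\infty)$, being a nonnegative combination of the linear function $2\theta$ and the convex power functions $\theta\mapsto\theta^{-(j-1)}$. A convex function on $[1/2,1]$ attains its maximum at an endpoint, so it suffices to check $h(1)=2+\sum_{j\ge1}2^{-j(j-1)/2}\le 4$ (already done above) and, using the same geometric domination of the exponents, $h(1/2)=1+\sum_{j\ge1}2^{-j(j-1)/2}2^{j-1}=1+\sum_{i\ge0}2^{-i(i-1)/2}\le 4$. This yields $\E\eta = h(\theta)\le 4$ for all $\nu<1$, completing the proof. Conceptually, the equality $h(1)=h(1/2)$ reflects a self-similarity of the procedure: replacing $\nu$ by $\nu/2$ inserts exactly one extra certain-success step but halves $m$, leaving the distribution of $\eta$ unchanged; this is the reason the maximum sits at the endpoints rather than in the interior.
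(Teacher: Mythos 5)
Your proof is correct, and it in fact supplies the very computation the paper declines to perform: the paper's own proof simply writes out $\E\eta = \min\set{1,\nu}\sum_{t=1}^\infty 2^t \left(1-\beta(2^{-t}/\nu)\right)\prod_{s=1}^{t-1}\beta(2^{-s}/\nu)$ --- the same series you reach via $\P{\tau = t}$ --- and then asserts the bound of $4$ ``from an arduous, but straight-forward, computation.'' Everything past that point in your argument is therefore genuinely new relative to the printed proof, and it checks out: the Abel-summation identity $\E[2^\tau]=2+\sum_{k\ge1}2^k P_k$ is valid because $P_k = O(2^{-k(k-1)/2})$ kills the boundary term; the reduction to $h(\theta)=2\theta+\sum_{j\ge1}2^{-j(j-1)/2}\theta^{1-j}$ is correct (with $m=2^{-d}\theta$ one gets $m\cdot 2^{d+1}=2\theta$ and the tail algebra goes through), and it exposes the scale invariance under $\nu\mapsto\nu/2$ that makes $\E\eta$ depend on $\nu$ only through $\theta$; and the convexity-plus-endpoints step is sound, since $h$ is a sum (uniformly convergent on $[\nicefrac12,1]$) of nonnegatively weighted convex functions, with both endpoint values collapsing to $2+\sum_{j\ge1}2^{-j(j-1)/2}\le 2+\sum_{j\ge1}2^{-(j-1)}=4$. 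One small slip to fix: for $\nu\in(\nicefrac12,1)$ your parametrization gives $d=0$, not $d\ge1$ as stated; this is harmless because the $d=0$ computation yields exactly the same $h(\theta)$ (or one can invoke your own self-similarity observation to push into $d\ge1$), but the case split should read $d\ge0$. What your route buys over the paper's is a fully checkable proof plus a structural explanation --- the equality $h(\nicefrac12)=h(1)$ reflecting the halving self-similarity --- of why the extremum sits at dyadic $\nu$; it also shows the true supremum is $2+\sum_{j\ge1}2^{-j(j-1)/2}\approx 3.64$, so the paper's constant $4$ is recovered with room to spare.
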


\ifsup
\begin{proof}
Let $p_t$ be the probability that the algorithm ends after time-step $t$, which is
\eq{
p_t = (1 - \beta(2^{-t} / \nu)) \prod_{s=1}^{t-1} \beta(2^{-s} / \nu).
}
Therefore
\eq{
\E\eta &= \E\left[\min\set{1,\nu} \over \ubar \nu_0\right] 
= \sum_{t=1}^\infty p_t \cdot {\min\set{1, \nu} \over M_t} \\
&= \min\set{1, \nu}\sum_{t=1}^\infty 2^t (1 - \beta(2^{-t} / \nu)) \prod_{s=1}^{t-1} \beta(2^{-s} / \nu) \\
&\leq 4,
}
where the final inequality follows from an arduous, but straight-forward, computation.
\end{proof}
\else
\fi
The problem with the naive method is that 
the expected running time of Algorithm \ref{alg:init} is $O(\log{1 \over \nu})$, which may be arbitrary large for small $\nu$ and lead to 
a high regret \emph{during the initialisation period}. Fortunately, the situation 
when $\nu$ is small is easy to handle, since the amount of resources required to complete such a job is also small.
The trick is to run $K$ offset instances of Algorithm \ref{alg:init} alongside a modified version of Algorithm \ref{alg:OAA}. First we describe the 
parallel implementations of
Algorithm \ref{alg:init}. For job $k$, start Algorithm \ref{alg:init} in time-step $k$, which means that the total amount of resources
used by the parallel copies of Algorithm \ref{alg:init} in time-step $t$ is bounded by 

\noindent
\hspace{-0.3cm}
\begin{minipage}{3.8cm}
\vspace{-0.5cm}
\eqn{
\nonumber &\sum_{k=1}^K \ind{t \geq k} 2^{k-t-1} \\
\label{eq:parallel} &\leq \min\set{1, 2^{K-t}}. 
}
\end{minipage}
\hspace{0.3cm}
\begin{minipage}{4cm}
\vspace{-0.4cm}
\scriptsize
\renewcommand{\arraystretch}{1.2}
\begin{tabular}{|p{0.5cm}p{0.35cm}p{0.35cm}p{0.35cm}p{0.4cm}|}
\hline 
Job   & $M_{k,1}$     & $M_{k,2}$     & $M_{k,3}$     & $M_{k,4}$        \\
$1$ & $\nicefrac{1}{2}$ & $\nicefrac{1}{4}$ & $\nicefrac{1}{8}$ & $\nicefrac{1}{16}$ \\
$2$ & $0$           & $\nicefrac{1}{2}$ & $\nicefrac{1}{4}$ & $\nicefrac{1}{8}$ \\
$3$ & $0$           & $0$           & $\nicefrac{1}{2}$ & $\nicefrac{1}{4}$ \\ \hline
$\!\!\!\!\sum\limits_{k=1}^K\! M_{k,t}$& $\nicefrac{1}{2}$ & $\nicefrac{3}{4}$ & $\nicefrac{7}{8}$ & $\nicefrac{7}{16}$ \\ \hline
\end{tabular} 
\end{minipage}

Algorithm \ref{alg:OAA} is implemented starting from time-step $1$, but only allocates resources to jobs for which the initialisation process has completed. Estimates
are computed using only the samples for which Algorithm \ref{alg:OAA} chose the allocation, which 
ensures that they are based on allocations with $M_{k,t} \leq \nu_k$.
Note that the analysis of the modified algorithm does not depend on the order in which the parallel processes are initialised.
The regret incurred by the modified algorithm is given in order notation in \eqref{eq:big-o-theorem}.
The proof is omitted, but relies on two observations. First, that the expected number of time-steps that a job is not (at least) fully allocated while
it is being initialised is $2$. The second is that the resources available to Algorithm \ref{alg:OAA} at time-step $t$ converges exponentially
fast to $1$ by \eqref{eq:parallel}.


\section{MINIMAX LOWER BOUNDS}
\vspace{-0.3cm}
Despite the continuous action space, the techniques used when proving minimax lower bounds for standard stochastic bandits \citep{ACFS95} can be 
adapted to our setting.
\ifsup
\else
The proof is included in the supplementary material.
\fi

\begin{theorem}
Given fixed $n$ and $8n \geq K \geq 2$ and an arbitrary algorithm, there exists an allocation problem 
for which the expected regret satisfies $R_n \geq {\sqrt{nK} \over 16 \sqrt{2}}$.
\end{theorem}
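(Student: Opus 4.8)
The plan is to adapt the standard information-theoretic minimax lower bound argument for stochastic bandits \citep{ACFS95} to this continuous-action resource allocation setting. The strategy is to construct a family of hard instances and show that no single algorithm can perform well against all of them, by exploiting the difficulty of distinguishing nearby instances from the limited (semi-bandit) feedback.

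First I would design the hard instance family. A natural choice is the resource-scant regime, corresponding to Case 1, where $\ell=0$ and the problem reduces to a $K$-armed Bernoulli bandit: restrict attention to allocating the entire budget to a single job, so $M_{k,t}\in\{0,1\}$ with $\sum_k M_{k,t}\le 1$. I would take all jobs to have difficulty close to $1$, say $\nu_k^{-1}=\tfrac12$ for all but one special job $j$, for which $\nu_j^{-1}=\tfrac12+\Delta$ with a gap $\Delta$ to be tuned. Fully allocating job $k$ then yields a Bernoulli reward with mean $\min\{1,1/\nu_k\}$, so choosing the special job is strictly better, and each suboptimal full allocation incurs instantaneous regret $\Delta$. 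The key point is that a single round of feedback about one job is exactly one Bernoulli sample, so the per-job information is no richer than in a classical bandit.

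Next I would carry out the distribution-change argument. For each candidate special job $j$, let $\P_j$ denote the law of the observation sequence under that instance, and let $T_j$ be the expected number of rounds the algorithm fully allocates job $j$. Using Pinsker's inequality together with the chain rule for KL divergence — which bounds $\KL(\P_0 \,\|\, \P_j)$ by the expected number of pulls of arm $j$ under the reference instance $\P_0$ times the per-pull KL divergence, itself of order $\Delta^2$ — I can show that no algorithm distinguishes the $K$ instances unless it spends order $1/\Delta^2$ rounds probing. Averaging the regret over the choice of $j$, and using that the total number of pulls is at most $n$, gives a lower bound of the form $R_n \gtrsim n\Delta\big(1 - c\,\Delta\sqrt{n/K}\big)$. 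Optimising over $\Delta$ by setting $\Delta \asymp \sqrt{K/n}$ then yields $R_n = \Omega(\sqrt{nK})$, and tracking the constants carefully produces the stated $\tfrac{\sqrt{nK}}{16\sqrt 2}$; the hypothesis $8n\ge K\ge 2$ is exactly what guarantees $\Delta\le\tfrac12$ so the construction is valid.

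The main obstacle I anticipate is not the combinatorial bandit skeleton but ensuring the reduction genuinely respects this problem's semi-bandit feedback and continuous action space: I must verify that permitting fractional allocations and observing all $K$ outcomes simultaneously does not give the learner extra discriminating power that would defeat the bound. The cleanest way around this is to argue that on the restricted instance family the only informative observations concern the contested jobs near difficulty $1$, and splitting the budget fractionally only produces noisier (higher-variance, per the discussion in \cref{sec:estimation}) estimates of the relevant $\nu_k^{-1}$, so the optimal adversary forces the learner back into a near-bandit information regime. Making this last reduction airtight — bounding the information gain from an arbitrary fractional allocation by that of the corresponding pure allocation — is the delicate step, after which the classical argument applies essentially verbatim.
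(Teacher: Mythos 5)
Your instance family and the overall information-theoretic skeleton coincide with the paper's proof: the paper uses a reference instance with $\nu_j = 2$ for every $j$, and $K$ alternatives in which one special job has $\nu_k = 2/(1+\epsilon)$, i.e.\ $\nu_k^{-1} = \tfrac12 + \tfrac{\epsilon}{2}$ (your $\Delta = \epsilon/2$); it applies Pinsker together with the chain rule for the KL divergence, averages the regret over the $K$ alternatives using the budget constraint, and optimises $\epsilon = \sqrt{K/(8n)}$, with $K \le 8n$ guaranteeing $\epsilon \le 1$ so the construction stays in the resource-scant regime. Up to this point you have reconstructed the argument.

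The genuine gap is exactly the step you flag as delicate and then defer. You cannot restrict an arbitrary algorithm to pure allocations $M_{k,t}\in\set{0,1}$, and the pull-counting change-of-measure you substitute would fail, because for a fractional allocation neither the information nor the regret is measured per ``pull''. The per-round KL contribution of job $k$ is the divergence between $\ber{M_{k,t}/2}$ and $\ber{M_{k,t}(1+\epsilon)/2}$, which by the $\chi^2$ bound is at most $\epsilon^2 M_{k,t}/(2-M_{k,t}) \le \epsilon^2 M_{k,t}$ --- linear in the allocated resource, not a constant of order $\Delta^2$ in every round with $M_{k,t}>0$; likewise the per-round regret in environment $k$ is $(\epsilon/2)(1-M_{k,t})$, again linear in resource mass rather than countable in pulls (consider a learner spreading $1/K$ over all jobs: it has no well-defined ``pull'' of a wrong arm, yet incurs regret $\approx \epsilon/2$ per round). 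The paper therefore performs no reduction at all: it runs the classical argument with resource mass replacing pull counts, obtaining $\KL(\Pk_0,\Pk_k) \le \epsilon^2\, \E_0\bigl[\sum_{t=1}^n M_{k,t}\bigr]$ and $R_n(k) \ge (\epsilon/2)\bigl(n - \E_k \sum_{t=1}^n M_{k,t}\bigr)$, with the budget constraint $\sum_k M_{k,t} \le 1$ giving $\sum_k \E_0 \sum_t M_{k,t} \le n$ in place of ``one pull per round'', after which Jensen closes the average over $k$. Your worry about semi-bandit feedback also dissolves automatically in this formulation: $\Pk_0$ and $\Pk_k$ agree on every coordinate $j \ne k$, so by the chain rule only the $k$th observation ever contributes to the divergence. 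In short, the ``airtight reduction'' you were seeking is unnecessary once both sides of Pinsker are expressed in units of allocated resource; your proposal stops precisely short of this substitution, and as sketched the pull-based accounting cannot be paired with the fractional learner's actual regret.
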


\newcommand{\Pk}{\mathbb P}

\ifsup
\begin{proof}
Let $1 \geq \epsilon > 0$ be a constant to be chosen later.
We consider a set of $K$ allocation problems where in problem $k$, 
$\nu_j = 2$ for all $j \neq k$ and $\nu_k = {2 \over 1 + \epsilon}$.
The optimal action in problem $k$
is to assign all available resources to job $k$ when the expected reward is ${1 + \epsilon \over 2}$.
The interaction between the algorithm and a problem $k$ defines a measure $\Pk_k$ on the set of outcomes (successes, allocations). We write
$\E_k$ for expectations with respect to measure $\Pk_k$. We have
\eqn{
\label{eq:lower-d} \E_k\left[\sum_{t=1}^n M_{k,t}\right] - \E_0 \left[\sum_{t=1}^n M_{k,t}\right] 
&\leq n\sqrt{{1 \over 2} \KL(\Pk_0, \Pk_k)}\,,
}
where $\KL(\Pk_0,\Pk_k)$ is the Kullback-Leibler divergence (or relative entropy) between $\Pk_0$ and $\Pk_k$.
The divergence is bounded by
\eqn{
\nonumber&\KL(\Pk_0, \Pk_k) 
\sr{(a)}\leq \E_0 \left[\sum_{t=1}^n {\epsilon^2 M_{k,t}^2 \over 4} \left({1 \over {M_{k,t} \over 2}} + {1 \over 1 - {M_{k,t} \over 2}}\right)\right] \\
&\sr{(b)}= \E_0 \left[\sum_{t=1}^n {\epsilon^2 M_{k,t} \over 2 - M_{k,t}} \right]
\label{eq:KL} \sr{(c)}\leq \epsilon^2 \E_0 \left[\sum_{t=1}^n M_{k,t}\right]\,,
}
where (a) follows from the telescoping property of the KL divergence and by bounding the KL divergence by the $\chi$-squared distance,
(b) is trivial and (c) follows since $M_{k,t} \leq 1$.
The $n$-step expected regret given environment $k$ is
\eqn{
\nonumber R_n(k) 
&= {n(1 + \epsilon) \over 2} - \E_k \sum_{t=1}^n \sum_{j=1}^K {M_{j,t} \over \nu_j} \\
\label{eq:lower-regret} &\sr{(b)}\geq {\epsilon \over 2} \left(n - \E_k \sum_{t=1}^n M_{k,t} \right) 
}
where (b) follows by recalling that $\nu_j = 2$ unless $j = k$, when $\nu_j = 2/(1+\epsilon)$.
Therefore,
\eq{
&\sup_k R_n(k) 
\sr{(a)}\geq {1 \over K} \sum_{k=1}^K R_n(k) \\
&\sr{(b)}\geq {1 \over K} \sum_{k=1}^K {\epsilon \over 2} \left(n - \E_k \sum_{t=1}^n M_{k,t} \right) \\ 
&\sr{(c)}\geq {1 \over K} \sum_{k=1}^K {\epsilon \over 2} \left(n - \E_0 \sum_{t=1}^n M_{k,t} - n\epsilon \sqrt{{1 \over 2} \E_0 \sum_{t=1}^n M_{k,t}}\right)  \\
&\sr{(d)}\geq {\epsilon \over 2K} \left(nK - n - n\epsilon \sum_{k=1}^K \sqrt{{1 \over 2}\E_0 \sum_{t=1}^n M_{k,t}}\right) \\
&\sr{(e)}\geq {\epsilon \over 2K} \left(nK - n - n\epsilon \sqrt{{K \over 2} \sum_{k=1}^K \E_0 \sum_{t=1}^n M_{k,t}}\right) \\
&\sr{(f)}\geq {\epsilon \over 2K} \left(nK - n - n\epsilon \sqrt{nK \over 2}\right) 
\sr{(g)}\geq {\epsilon n \over 4} - {\epsilon^2 n^{3 \over 2} \over 2\sqrt{2} K^{1 \over 2}}\,,
}
where (a) follows since the max is greater than the average,
(b) follows from \eqref{eq:lower-regret},
(c) is obtained by combining \eqref{eq:lower-d} and \eqref{eq:KL},
(d) follows from the fact that $\sum_{k=1}^K M_{k,t} \leq 1$,
(e) is true by Jensen's inequality and (f/g) are straight-forward.
Choosing $\epsilon = \sqrt{K / (8 n)}$ leads to $\sup_k R_n(k) \geq {\sqrt{nK} /( 16 \sqrt{2})}$ as required.
\end{proof}
\else
\fi

\section{EXPERIMENTS}
\vspace{-0.3cm}
\ifsup
\else
All code and data is available in the supplementary material. 
\fi
Data points were generated using the modified algorithm described in Section \ref{sec:init} and by taking the mean of 300 samples.
With this many samples the standard error is relatively low (and omitted for readability). We should note that the variance in the regret of the modified algorithm
is reasonably large, because the regret depends linearly on the random $\eta_k$. For known lower bounds the variance is extremely low.
To illustrate the behaviour of the algorithm we performed four experiments on synthetic data with $K = 2$, which are plotted below as TL (top left), TR, BL, BR (bottom right) respectively.
In TL we fixed $n = 10^4$, $\nu_1 = 2$ and plotted
the regret as a function of $\nu_2 \in [2, 10]$. The experiment shows the usual bandit-like dependence on the gap $1/\Delta_{1,2}$. 
In TR we fixed $\nu_1 = \nicefrac{4}{10}$, $\nu_2 = \nicefrac{6}{10}$ and plotted $R_n / \log^2 n$ as
a function of $n$. The experiment lies within case 2 described in \cref{sec:theorems} and shows that the algorithm suffers regret
$R_n \approx 45 \log^2 n$ as predicted by \cref{thm:main}.
In BL we fixed $n = 10^5$, $\nu_1 = \nicefrac{4}{10}$ and plotted the regret as a function of $\nu_2 \in [\nicefrac{4}{10},  1]$. The results show the algorithm suffering $O(\log^2n)$
regret for both processes until the critical point when $\nu_2 > \nicefrac{6}{10}$ when the 
second process can no longer be fully allocated, which is quickly learned and
the algorithm suffers $O(\log^2 n)$ regret for only one process.
In BR we fixed $\nu_1 = \nicefrac{4}{10}$ and $\nu_2 = \nicefrac{6}{10}$ and plotted the regret as a function of $n$ for two algorithms.
The first algorithm (solid blue) is the modified version of Algorithm
\ref{alg:OAA} as described in \cref{sec:init}. The second (dotted red) 
is the same, but uses the unweighted estimator $w_{k,t} = 1$ for all $k$ and $t$.
The result shows that both algorithms suffer sub-linear regret, but that the weighted estimator is a significant improvement over the unweighted one.

\todok{(1) TR plot is empty and the x-label is not clear. (2) Perhaps,
  make the y-ticks 5e3 and 10e3 and not 5,000 and 10,000. We have too
  much space for the y-axis and little for the actual plot. (3) colour plot
  is not clear when printed in BW.  }
\begin{tikzpicture}[baseline,font=\scriptsize]
  \begin{axis}[xlabel shift=-5pt,ylabel shift=-5pt,xlabel={$\nu_2$},ylabel={$R_n$},xmin=2,ymin=130,height=3cm,width=4.7cm,mark size=0.2pt,compat=newest]
    \addplot+[only marks] table {exp1.txt};
  \end{axis}
\end{tikzpicture}
\begin{tikzpicture}[baseline,font=\scriptsize]
  \begin{axis}[xlabel shift=-5pt,ylabel shift=-5pt,xlabel={$n$},ylabel={$\displaystyle {R_n \over \log^2 n}$},compat=newest,mark size=0.7pt,height=3cm,scaled ticks=false,width=4.38cm,xtick={0,1000000},xticklabels={0,1e6}] 
    \addplot+[only marks] table {exp2.txt};
  \end{axis}
\end{tikzpicture}

\vspace{-0.3cm}
\begin{tikzpicture}[font=\scriptsize]
  \begin{axis}[ytick={2000,5000},yticklabels={2e3,5e3},xlabel shift=-5pt,ylabel shift=-5pt,xlabel={$\nu_2$},ylabel={$R_n$},compat=newest,mark size=0.2pt,height=3cm,width=4.7cm] 
    \addplot+[only marks] table {exp3.txt};
  \end{axis}
\end{tikzpicture}
\begin{tikzpicture}[font=\scriptsize]
  \begin{axis}[ytick={0,10000},yticklabels={0,1e4},xlabel shift=-5pt,ylabel shift=-5pt,scaled ticks=false,xlabel={$n$},ylabel={$R_n$},compat=newest,mark size=0.2pt,height=3cm,width=4.74cm,xtick={0,100000},xticklabels={0,1e5}] 
    \addplot+[only marks,blue] table[x index=0,y index=1] {exp4.txt};
    \addplot+[each nth point=3, filter discard warning=false, unbounded coords=discard,only marks,red] table[x index=0,y index=2] {exp4.txt};
  \end{axis}
\end{tikzpicture}

\section{CONCLUSIONS}
\vspace{-0.3cm}
We introduced the linear stochastic resource allocation problem and a
new optimistic algorithm for this setting. Our main result shows that
the new algorithm enjoys a (squared) logarithmic problem-dependent
regret.  We also presented a minimax lower bound of
$\Omega(\sqrt{nK})$, which is consistent with the problem-dependent
upper bound. The simulations confirm the theory and highlight the
practical behaviour of the new algorithm. There are many open
questions and possibilities for future research.  Most important is
whether the $\log^2n$ can be reduced to $\log
n$. Problem-dependent lower bounds would be interesting. 
The algorithm is not anytime (although a doubling trick presumably
works in theory). Developing and analysing algorithms when the
horizon it not known, and have high-probability bounds are both of
interest. We also wonder if Thompson sampling can be efficiently implemented for
some reasonable prior, and if it enjoys the same practical and
theoretical guarantees in this domain as it does for bandits. Other
interesting extensions are when resources are not replenished, or the
state of the jobs follow a Markov process. Finally, we want to emphasise that
we have made just the first steps towards developing this new and interesting setting. We hope
to see significant activity extending and modifying the model/algorithm for specific problems.

\vspace{-0.5cm}
\paragraph{Acknowledgements}
This work was supported by the Alberta Innovates Technology Futures,
NSERC, by EU Framework 7 Project No.
248828 (ADVANCE), and by Israeli Science Foundation grant ISF-
1567/10. Part of this work was done while Csaba Szepesv\'ari was visiting
Technion. 

\appendix

\ifsup
\section{TECHNICAL INEQUALITIES}

The proof of the following theorem is given in the supplementary material.

\begin{theorem}\label{thm:peeled-bernstein}
Let $\delta \in (0,1)$ and $X_1, \ldots, X_n$ be a sequence of random variables adapted to filtration $\set{\calF_t}$ with
$\E[X_t|\calF_{t-1}] = 0$.
Let $R_t$ be $\calF_{t-1}$-measurable such that $|X_t|\le R_t$ almost surely,
$R = \max_{t \leq n} R_t$.
Define $S = \sum_{t=1}^n X_t$, $V^2 = \sum_{t=1}^n \Var[X_t|\calF_{t-1}]$,
and 
\eq{
\delta_{r,v} &= {\delta \over 3(r+1)^2 (v+1)^2}\,, \\
f(r, v) &= {r+1 \over 3} \log{2 \over \delta_{r,v}}  \\
&\quad + \sqrt{2 (v+1) \log{2 \over \delta_{r,v}} + \left({r+1 \over 3}\right)^2 \log^2{2 \over \delta_{r,v}}}\,.
}
Then
$\displaystyle \P{|S| \geq f(R, V^2)} \leq \delta$.
\end{theorem}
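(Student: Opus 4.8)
The plan is to reduce this data-dependent statement to a fixed-parameter martingale Bernstein (Freedman) inequality by a double \emph{peeling} argument over the random quantities $R$ and $V^2$. The starting point is the classical Freedman inequality: for a martingale difference sequence with a \emph{deterministic} increment bound $|X_t| \le b$ and predictable variance process $W_k = \sum_{t \le k}\Var[X_t|\calF_{t-1}]$, one has $\P{S \ge \epsilon \text{ and } W_n \le v} \le \exp\!\left(-\frac{\epsilon^2}{2(v + b\epsilon/3)}\right)$ for every $\epsilon, v > 0$. The crucial feature is that the variance enters only through the event $\{W_n \le v\}$, so a random variance is already tolerated; the genuine difficulties are that the increment bound $R$ is itself random and that the threshold $f(R,V^2)$ we must beat depends on both random quantities simultaneously.

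First I would dispense with the randomness of the increment bound by truncation. For each integer $r \ge 0$ set $X_t^{(r)} = X_t\,\ind{\max_{s \le t} R_s \le r+1}$. Since each $R_s$ (and hence the indicator) is $\calF_{t-1}$-measurable, $(X_t^{(r)})_t$ is again a martingale difference sequence, now with the deterministic bound $|X_t^{(r)}| \le r+1$ and predictable variance dominated by that of $(X_t)$. On the event $\{R \le r+1\}$ the truncated sequence coincides with the original one, so $\sum_t X_t^{(r)} = S$ there. Applying Freedman to $(X_t^{(r)})$ with increment bound $b = r+1$ and variance threshold $v+1$, and doubling to cover both tails, gives $\P{|S| \ge \epsilon,\ R \le r+1,\ V^2 \le v+1} \le 2\exp\!\left(-\frac{\epsilon^2}{2((v+1)+(r+1)\epsilon/3)}\right)$.

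Next I would choose $\epsilon$ to equate this bound to $\delta_{r,v}$. Solving the resulting quadratic $\epsilon^2 = 2(v+1)\log\frac{2}{\delta_{r,v}} + \frac{2(r+1)}{3}\,\epsilon\log\frac{2}{\delta_{r,v}}$ for its positive root reproduces exactly the definition of $f(r,v)$ in the statement, so $\P{|S| \ge f(r,v),\ R \le r+1,\ V^2 \le v+1} \le \delta_{r,v}$. Now comes the peeling: decomposing over $r = \floor{R}$ and $v = \floor{V^2}$, on each cell we have $r \le R < r+1$ and $v \le V^2 < v+1$, and since $f(\cdot,\cdot)$ (with $\delta_{r,v}$ baked in) is monotone increasing in both arguments, $f(R,V^2) \ge f(r,v)$. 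Hence each cell contributes at most $\delta_{r,v}$, and summing, $\P{|S| \ge f(R,V^2)} \le \sum_{r,v \ge 0}\delta_{r,v} = \frac{\delta}{3}\big(\sum_{r\ge 0}(r+1)^{-2}\big)^2 = \frac{\pi^4}{108}\,\delta \le \delta$, where the factor $3$ in $\delta_{r,v}$ is precisely what absorbs the Basel-type constant $(\pi^2/6)^2$.

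I expect the main obstacle to be the simultaneous randomness of $R$ and $V^2$: neither can be fixed before invoking a concentration bound. Truncation handles the increment bound cleanly and Freedman's event-form handles the variance, but one must verify carefully that on the cell $\{\floor{R}=r,\ \floor{V^2}=v\}$ all three reductions hold at once --- the truncated and original sequences agree, the deterministic variance threshold $v+1$ is valid, and the monotonicity $f(R,V^2)\ge f(r,v)$ points the right way \emph{despite} $\delta_{r,v}$ shrinking as $r,v$ grow. Checking this monotonicity, where the explicit $(r+1)$ and $(v+1)$ factors and the implicit $\log(2/\delta_{r,v})$ term all increase together, is the one place where a sign slip would break the argument.
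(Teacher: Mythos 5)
Your proposal is correct and matches the paper's own argument essentially step for step: the same peeling over integer shells of $R$ and $V^2$ with monotonicity of $f$ (the paper uses $\ceil{R}=r$, $\ceil{V^2}=v$ and compares to $f(r-1,v-1)$, which is your floor-indexed decomposition shifted by one), the same truncation device $Y_t = X_t\,\ind{R_t \le r}$ to extend the martingale Bernstein--Freedman inequality to predictable random increment bounds, and the same quadratic inversion yielding $f(r,v)$ (the paper's Lemma on inverting Bernstein's bound), with the factor $3$ in $\delta_{r,v}$ absorbing the $(\pi^2/6)^2$ constant in both cases. No gaps; the one delicate point you flag --- that $f$ increases in $(r,v)$ even though $\delta_{r,v}$ shrinks --- is exactly the monotonicity the paper also invokes.
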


\begin{proof}[Proof of \cref{thm:peeled-bernstein}]
Note that $f(r, v)$ is strictly monotone increasing in both $r$ and $v$.
We now use a peeling argument. We have,
\eq{
&\P{|S_n| \geq f(R, V^2)} \\
&\sr{(a)}= \sum_{r=1}^\infty \sum_{v=1}^\infty \P{|S_n| \geq f(R, V^2), \ceil{V^2} = v, \ceil{R} = r} \\
&\sr{(b)}\leq \sum_{r=1}^\infty \sum_{v=1}^\infty \P{|S_n| \geq f(r-1, v-1), \ceil{V^2} = v, \ceil{R} = r} \\
&\sr{(c)}\leq \sum_{r=1}^{\infty} \sum_{v=1}^{\infty} 2 \exp\left(-{f(r-1, v-1)^2 \over 2 v + {2r f(r-1, v-1) \over 3}}\right) \\\
&\sr{(d)}\leq \sum_{r=1}^{\infty} \sum_{v=1}^{\infty} \delta_{r-1,v-1} 
\sr{(e)}= {\delta \over 3} \sum_{r=1}^\infty \sum_{v=1}^{\infty} {1 \over v^2 r^2} 
\sr{(f)}\leq \delta \,,
}
where (a) follows from the positivity of $R$ and $V$,
(b) by the monotonicity of $f$,
(c) by  \cref{thm:bernstein} stated below (a martingale version Bernstein's inequality),
(d) by \cref{lem:bernstein-invert},
(e) by the definition of $\delta_{r,v}$,
(f) is trivial.
\end{proof}

\begin{theorem}[Theorem 3.15 of \citealt{McD98}, see also \citealt{Fre75} and \citealt{Ber46}]\label{thm:bernstein}
Let $X_1, \ldots, X_n$ be a sequence of random variables adapted to the filtration $\set{\calF_t}$ with
$\E[X_t|\calF_{t-1}] = 0$.
Further, let $R_t$ be $\calF_{t-1}$-measurable such that $X_t\le R_t$ almost surely,
$R = \max_{t \leq n} R_t$ and 
$V^2 = \sum_{t=1}^n \Var[X_t|\calF_{t-1}]$.
Then for any $\epsilon, r, v > 0$, 
\eq{
\P{\sum_{t=1}^n X_t \geq \epsilon, V^2 \leq v, R \leq r} \leq  \exp\left(- {\epsilon^2 \over 2 v + {2 \epsilon r \over 3}} \right).
}
\end{theorem}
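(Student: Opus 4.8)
The plan is to prove this one-sided Freedman inequality by the exponential-supermartingale (Chernoff) method. Write $S_t = \sum_{s\le t} X_s$ and $W_t = \sum_{s\le t}\Var[X_s\mid\calF_{s-1}]$, so that $W_n = V^2$, and fix a parameter $\lambda>0$ to be optimised at the very end. The engine of the argument is a one-sided Bennett bound on the conditional moment generating function of each increment: since $\phi(x)=(e^x-1-x)/x^2$ is increasing and $X_t\le R_t$ with $\E[X_t\mid\calF_{t-1}]=0$, an application of $1+u\le e^u$ gives
\[
\E[e^{\lambda X_t}\mid\calF_{t-1}]\le\exp\!\left(\frac{e^{\lambda R_t}-1-\lambda R_t}{R_t^2}\,\Var[X_t\mid\calF_{t-1}]\right).
\]

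The next step is to replace the random quantities $R=\max_t R_t$ and $V^2=W_n$, which appear inside the probability, by the deterministic constants $r$ and $v$. Because $R_t$ and $W_t$ are both predictable (that is, $\calF_{t-1}$-measurable), I would introduce the truncated increments $Y_t = X_t\,\ind{R_t\le r,\; W_t\le v}$. The $\calF_{t-1}$-measurable indicator keeps $Y_t$ conditionally centred, forces $Y_t\le r$ pointwise, and --- since $W_t$ is non-decreasing --- makes the predictable variances telescope to $\sum_{t}\Var[Y_t\mid\calF_{t-1}]\le v$ deterministically. Crucially, on the target event $\set{R\le r,\,V^2\le v}$ every indicator equals one, so $\sum_t Y_t=S_n$ there, and hence $\set{S_n\ge\epsilon,\,V^2\le v,\,R\le r}\subseteq\set{\sum_t Y_t\ge\epsilon}$.

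With deterministic bounds in hand, monotonicity of $\phi$ turns the increment bound into $\E[e^{\lambda Y_t}\mid\calF_{t-1}]\le\exp(\psi(\lambda)\Var[Y_t\mid\calF_{t-1}])$ with $\psi(\lambda)=(e^{\lambda r}-1-\lambda r)/r^2$, so that $Z_t=\exp(\lambda\sum_{s\le t}Y_s-\psi(\lambda)\sum_{s\le t}\Var[Y_s\mid\calF_{s-1}])$ is a supermartingale with $\E Z_0=1$. Markov's inequality applied to $Z_n$, together with $\sum_t\Var[Y_t\mid\calF_{t-1}]\le v$, then yields $\P{\sum_t Y_t\ge\epsilon}\le\exp(-\lambda\epsilon+v\,\psi(\lambda))$, and by the inclusion above the same bound controls the event of interest. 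I would finish by optimising $\lambda$: the Bennett choice $\lambda=\frac{1}{r}\log(1+r\epsilon/v)$ produces the exponent $-\frac{v}{r^2}h(r\epsilon/v)$ with $h(u)=(1+u)\log(1+u)-u$, which I then relax to the claimed Bernstein form $\exp(-\epsilon^2/(2v+2\epsilon r/3))$ using the elementary inequality $h(u)\ge \frac{u^2}{2(1+u/3)}$.

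I expect the main obstacle to be the second step rather than the calculus: the bound must be stated for the \emph{joint} event involving the running maximum $R$ and the total conditional variance $V^2$, and the trick is to find a truncation (or equivalently a predictable stopping time) that simultaneously (i) converts these random quantities into the deterministic constants needed to make $Z_t$ a supermartingale and (ii) leaves the partial sum unchanged on the event we are bounding. Verifying the three properties of $Y_t$ --- conditional centredness, the pointwise bound $Y_t\le r$, and the telescoping variance bound $\le v$ --- is the delicate bookkeeping; once those hold, the supermartingale argument and the final Bennett-to-Bernstein simplification are routine.
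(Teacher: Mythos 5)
Your proof is correct, but it takes a genuinely different --- and far more self-contained --- route than the paper, which in fact offers no proof of this statement at all: the inequality is simply cited as Theorem 3.15 of McDiarmid (1998), which already bounds the joint event involving $V^2\le v$ and $R\le r$ for \emph{deterministic} bounds $R_t$, and the paper's only original content is the one-line remark following the theorem, which upgrades to predictable random $R_t$ by applying the cited result to $Y_t=X_t\,\ind{R_t\le r}$ and observing that on $\set{R\le r}$ the truncated and original partial sums coincide. You instead re-derive everything ab initio by the Freedman/Bennett supermartingale method, and your bookkeeping is sound: the indicator $\ind{R_t\le r,\; W_t\le v}$ is $\calF_{t-1}$-measurable (note $W_t$ is predictable even though it includes the $t$-th variance, since $\Var[X_t\mid\calF_{t-1}]$ is $\calF_{t-1}$-measurable), so $Y_t$ stays conditionally centred with $Y_t\le r$; because $W_t$ is non-decreasing and already contains the $t$-th term, the indices with $W_t\le v$ form an initial segment whose variances sum to at most $v$, and adding the condition $R_t\le r$ only discards non-negative terms, so $\sum_t\Var[Y_t\mid\calF_{t-1}]\le v$ holds deterministically, which is exactly what your Markov step requires; and on the target event every indicator equals one, giving the inclusion you use. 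The choice $\lambda=r^{-1}\log(1+r\epsilon/v)$ yields the Bennett exponent $-(v/r^2)h(r\epsilon/v)$, and $h(u)\ge u^2/(2(1+u/3))$ gives precisely the stated Bernstein form. Two observations on the comparison: your double truncation (on $W_t$ as well as $R_t$) is essentially the mechanism by which the event-form of Freedman's inequality is proved in the sources the paper cites, so you have in effect re-proved both the cited theorem and the paper's extension in one stroke; and whereas the paper's citation-plus-truncation buys brevity, your argument buys a complete, checkable proof --- with the caveat that the $W_t\le v$ half of your truncation is indispensable to your route (without it the supermartingale's variance process has no deterministic bound), a point your plan correctly identifies as the crux rather than routine calculus.
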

We note that although this inequality is usually stated for deterministic $R_t$, the extension is trivial:
Just define $Y_t = X_t \,\ind{R_t\le r}$ and apply the standard inequality to $Y_t$. The result then follows since on $R\le r$, 
$Y_t=X_t$ for all $t$ and thus $\sum_{t=1}^n X_t  = \sum_{t=1}^n Y_t$.

\begin{lemma}\label{lem:bernstein-invert}
If $\epsilon \geq {r \over 3} \log{2 \over \delta} + \sqrt{2 v \log{2 \over \delta} + {r^2 \over 9} \log^2 {2 \over \delta}}$, then
$2\exp\left(-{\epsilon^2 \over 2v + {2\epsilon r \over 3}}\right) \leq \delta$.
\end{lemma}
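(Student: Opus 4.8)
The plan is to recognise that the stated hypothesis is precisely the condition under which a certain upward-opening quadratic in $\epsilon$ becomes nonnegative, so the whole lemma reduces to inverting the Bernstein bound by the quadratic formula. Writing $L = \log\frac{2}{\delta} > 0$ for brevity, I would first take logarithms: the target $2\exp\!\left(-\frac{\epsilon^2}{2v + 2\epsilon r/3}\right) \le \delta$ is equivalent, after dividing by $2$ and applying $-\log$, to $\frac{\epsilon^2}{2v + 2\epsilon r/3} \ge L$. Since $r, v, \epsilon > 0$ the denominator $2v + \frac{2\epsilon r}{3}$ is strictly positive, so clearing it preserves the direction of the inequality and yields the quadratic inequality $\epsilon^2 - \frac{2rL}{3}\epsilon - 2vL \ge 0$.

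Second, I would treat $q(\epsilon) = \epsilon^2 - \frac{2rL}{3}\epsilon - 2vL$ as a quadratic opening upward with negative constant term $-2vL$, which therefore has exactly one positive root $\epsilon_0$ and one negative root. By the quadratic formula the positive root is $\epsilon_0 = \frac{rL}{3} + \sqrt{\frac{r^2 L^2}{9} + 2vL}$. The key (and essentially only) observation is that, substituting $L = \log\frac{2}{\delta}$, this $\epsilon_0$ matches the hypothesised lower bound $\frac{r}{3}\log\frac{2}{\delta} + \sqrt{2v\log\frac{2}{\delta} + \frac{r^2}{9}\log^2\frac{2}{\delta}}$ term for term, once one recognises the radicand $\frac{r^2 L^2}{9} + 2vL$ as $2v\log\frac{2}{\delta} + \frac{r^2}{9}\log^2\frac{2}{\delta}$.

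Third, I would conclude: because $q$ is nonnegative exactly for $\epsilon \ge \epsilon_0$ (the negative root being irrelevant since $\epsilon > 0$), the hypothesis $\epsilon \ge \epsilon_0$ gives $q(\epsilon) \ge 0$, and unwinding the equivalences established in the first step returns the claimed bound $2\exp\!\left(-\frac{\epsilon^2}{2v + 2\epsilon r/3}\right) \le \delta$.

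The main obstacle here is essentially nonexistent, as the argument is elementary algebra; the only points needing minor care are (i) confirming positivity of the denominator so that multiplying through does not reverse the inequality, and (ii) correctly matching the radical in the quadratic root to the expression appearing in the hypothesis. If one prefers to avoid the full chain of equivalences, it suffices to prove only the required one-directional implication by substituting the hypothesised lower bound directly into $q$ and checking $q(\epsilon) \ge 0$, but computing the positive root explicitly is the cleaner route.
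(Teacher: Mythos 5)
Your proof is correct: the paper states Lemma~\ref{lem:bernstein-invert} without proof precisely because the intended argument is this standard inversion, and your route---taking logarithms, clearing the positive denominator $2v + 2\epsilon r/3$, and observing that the hypothesised lower bound is exactly the larger root of the upward-opening quadratic $\epsilon^2 - \tfrac{2rL}{3}\epsilon - 2vL$ with $L = \log\tfrac{2}{\delta}$---is exactly it. The only cosmetic slip is the claim that the quadratic has one positive and one negative root (when $v=0$ one root is zero), but this is immaterial since all you need is $q(\epsilon)\ge 0$ for $\epsilon$ at least the larger root, which holds in all cases.
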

\fi

\ifsup
\section{TABLE OF NOTATION}

\renewcommand{\arraystretch}{1.2}
\noindent
\begin{tabular}{@{}p{1.2cm}p{6.6cm}}
$K$                   & number of jobs \\
$n$                   & time horizon \\
$\nu_k$               & parameter characterising difficulty of job $k$ \\
$\beta(p)$            & function $\beta(p) \defined \min\set{1, p}$ \\
$M_{k,t}$             & resources assigned to job $k$ in time-step $t$ \\
$X_{k,t}$             & outcome of job $k$ in time-step $t$ \\
$\ubar \nu_{k,t}$     & lower bound on $\nu_k$ at time-step $t$ \\
$\bar \nu_{k,t}$      & upper bound on $\nu_k$ at time-step $t$ \\
$\delta$              & bound on probability that some confidence intervals fails \\
$\pi_t(i)$            & $i$th easiest job at time-step $t$ sorted by $\ubar \nu_{k,t-1}$ \\
$\ell$                & number of fully allocated jobs under optimal allocation \\
$S^*$                 & optimal amount of resources assigned to overflow process \\
$A^*$                 & contains the $\ell$ easiest jobs (sorted by $\nu_k$) \\
$A_t$                 & set of jobs with $M_{k,t} = \ubar \nu_{k,t-1}$ at time-step $t$ \\
$B_t$                 & equal to $\pi_t(\ell+1)$ \\
$\eta_k$              & ${\min\set{1, \nu_k} \over \ubar \nu_{k,0}}$ \\
\end{tabular}
\fi

\bibliographystyle{plainnat}
\bibliography{mem-bandits}

\end{document}